%%%%%%%%%%%%%%%%%%%%%%%%%%%%%%%%%%%%%%%%%%%%%%%%%%%%%%%%%%%%%%%%%%%%%%%%%%%%%%%%
%2345678901234567890123456789012345678901234567890123456789012345678901234567890
%        1         2         3         4         5         6         7         8

\documentclass[letterpaper, 10 pt, conference]{ieeeconf}

\IEEEoverridecommandlockouts                              
% This command is only needed if 
% you want to use the \thanks command

\overrideIEEEmargins  
% Needed to meet printer requirements.

% See the \addtolength command later in the file to balance the column lengths
% on the last page of the document

% The following packages can be found on http:\\www.ctan.org
%\usepackage{graphicx} % for pdf, bitmapped graphics files
\usepackage[caption=false,font=footnotesize]{subfig}
\usepackage {graphicx}

\usepackage{cite}
\usepackage[utf8]{inputenc}
\usepackage{amsmath}
\usepackage[ruled, vlined, linesnumbered]{algorithm2e}
\usepackage{amssymb}
\usepackage{amsfonts}
\usepackage[bb=boondox]{mathalfa}
\usepackage[utf8]{inputenc}
\usepackage[english]{babel}
\usepackage{bm}
\usepackage{accents}
\newtheorem{theorem}{\bf Theorem}

\newtheorem{definition}{\bf Definition}

\newtheorem{remark}{\bf Remark}

\newtheorem{lemma}{\bf Lemma}
\newtheorem{claim}{\bf Claim}

\usepackage[caption=false,font=footnotesize]{subfig}

%%%%%%%%%%%%%%%%%%%%%%%%%%%%%%%%%%%%%%%%%%%
%% Ali added/removed 
%\usepackage{appendix}
\usepackage[]{xcolor}
\usepackage{physics}

\usepackage{hyperref}
\hypersetup{
    colorlinks,
    linkcolor={black!50!black},
    citecolor={black!50!black},
    %urlcolor={black!80!black}
}

%%%%%%%%%%%%%%%%%%%%%%%%%%%%%%%%%%%%%%%%%%%

\title{\LARGE \bf
A Smoothing Algorithm for Minimum Sensing Path Plans in Gaussian Belief Space  
}

\author{ Ali Reza Pedram$^1$, and Takashi Tanaka$^{2}$ % <-this % stops a space
%\thanks{*This work is supported by Lockheed Martin Corporation.}
%\thanks{$\dagger$ AR. Pedram and J. Stefan contributed equally to this work.}
\thanks{$^{1}$ Walker Department of Mechanical Engineering, University of Texas at Austin. {\tt\small apedram@utexas.edu}.
        %$^{2}$Odyssey Space Research.
        %{\tt\small jeb.stefan@odysseysr.com}.
         $^{2}$Department of Aerospace Engineering and Engineering Mechanics, University of Texas at Austin.  {\tt\small ttanaka@utexas.edu}. }%
}

\begin{document}

\maketitle
\thispagestyle{empty}
\pagestyle{empty}

%%%%%%%%%%%%%%%%%%%%%%%%%%%%%%%%%%%%%%%%%%%%%%%%%%%%%%%%%%%%%%%%%%%%%%%%%%%%%%%%
\begin{abstract}
This paper explores minimum sensing navigation of robots in environments cluttered with
obstacles. The general objective is to find a path plan to a goal region that requires minimal
sensing effort. In [1], the information-geometric RRT* (IG-RRT*) algorithm was  proposed to
efficiently   find   such   a   path.   However,   like   any   stochastic   sampling-based   planner,   the
computational complexity of IG-RRT* grows quickly, impeding its use with a large number of
nodes. To remedy this limitation, we suggest running IG-RRT* with a moderate number of
nodes, and then using a smoothing algorithm to adjust the path obtained. To develop a
smoothing algorithm, we explicitly formulate the minimum sensing path planning problem as
an   optimization   problem.   For   this   formulation,   we   introduce   a   new   safety   constraint   to
impose a bound on the probability of collision with obstacles in continuous-time, in contrast
to   the   common   discrete-time   approach.   The   problem   is   amenable   to   solution   via   the
convex-concave   procedure   (CCP).   We   develop   a   CCP   algorithm   for   the   formulated
optimization and use this algorithm for path smoothing. We demonstrate the efficacy of the
proposed approach through numerical simulations.
% This paper considers the minimum sensing navigation of a robot in an obstacle-filled environment, where we aim to find a path plan toward a goal region that does not require high sensing effort. In \cite{pedram2022gaussian}, the information-geometric RRT* (IG-RRT*) algorithm was proposed to find such a path  efficiently. However, like any stochastic sampling-based planner, the computational complexity of IG-RRT* grows quickly, inhibiting its use with a large number of nodes.
% As a remedy for this limitation, we suggest running IG-RRT*  with a moderate number of nodes, and using a smoothing algorithm to adjust the path obtained.    
% To develop a smoothing algorithm, we explicitly formulate the minimum sensing path planning problem as an optimization problem. For this formulation, we derive a new safety constraint to impose a bound on the probability of collision with obstacles in continuous-time, as opposed to the common discrete-time approach. We show that the formulated problem is amenable to solution via the convex-concave procedure (CCP). We develop a CCP algorithm for the formulated optimization and propose to use the developed algorithm for path smoothing. We demonstrate the efficacy of the proposed approach through numerical simulations. 
\end{abstract}

%%%%%%%%%%%%%%%%%%%%%%%%%%%%%%%%%%%%%%%%%%%%%%%%%%%%%%%%%%%%%%%%%%%%%%%%%%%%%%%%
\section{Introduction}
\label{sec:Intro}
Advancements in sensing and computer vision techniques over the last decades have facilitated the acquisition of  ample amounts of information for robot navigation. However, using all the available data can result in  long processing times; draining the available computational power and communication bandwidth. % In particular, the literature on visual odometry/visual simultaneous localization and mapping (VO/VSLAM) is rich in papers that  demonstrates the limitation of robotic systems in processing high information flow. 
One popular approach for managing the overhead of intense information processing is the strategic use of available sensory data as opposed to full deployment. For instance, in \cite{carlone2018attention,zhao2020good}, several feature selection algorithms are proposed and compared for efficient visual odometry/visual simultaneous localization and mapping. In \cite{carlone2018attention,zhao2020good} the features that contribute the most to state estimation are incorporated meanwhile others are ignored. In the same line of research, \cite{tzoumas2020lqg} incorporated a restriction on the sensing budget into an optimal control problem and proposed an algorithm for control and sensing co-design. In \cite{pedram2021dynamic}, an attention mechanism is proposed for the optimal allocation of sensing resources.

%Beside the optimal utilization of available sensor modalities in navigation
Executing different path plans requires different amounts of sensory data.
%Different path plans require different amounts of sensory data when executed. 
Thus, it is crucial for an autonomous system to be able to find path plans that require minimal sensing. To achieve this capability, \cite{pedram2022gaussian}  and \cite{pedram2021rationally} established the minimum sensing path planning paradigm, 
%where the motion plan  that requires both minimum control and information processing effort to be followed.
where a pseudo-metric is introduced for Gaussian belief space to quantify the
augmented  control and information costs incurred in the transition between two arbitrary states. The work \cite{pedram2022gaussian} proposed  an asymptotically optimal \cite{solovey2020revisiting} sampling-based motion planner, referred to as information-geometric RRT* (IG-RRT*) to find the shortest path in the introduced metric.
%In , minimum-sensing motion planning of mobile robots is studied, where the objective is to seek a motion plan that is short and simultaneously requires minimum sensing effort to be followed.
%In \cite{pedram2021gaussian}, 

The success of IG-RRT* in providing optimal path plans in obstacle-cluttered environments is verified through simulation in \cite{pedram2022gaussian}. However, like any RRT* algorithm, the time complexity of IG-RRT*  with $n$ nodes grows as $\mathcal{O}(n\log n)$ \cite{karaman2011sampling}. This complexity restricts finding a precise approximation of the optimal path plan directly by deploying  IG-RRT* for a large number of steps. We thus propose a two-stage procedure to circumvent this computational limitation of IG-RRT*. In the first stage, the algorithm is run for a moderate number of iterations to find an approximation of the optimal path. The path sought in this stage is ``jagged" in most cases due to the stochastic nature of IG-RRT*. In the second stage, this path is ``smoothed" toward the optimal path using gradient-based solvers.

Performing the second stage requires the minimum sensing path problem to be formulated explicitly as a shortest path problem with safety constraints ensuring a small probability of collision with obstacles. The explicit formulation of the shortest path problem in the belief space with safety constraints is a well-studied problem, and a core element of chance-constrained (CC) motion planners \cite{blackmore2011chance}. 
CC planners impose a set of deterministic constraints (e.g., see \cite[Equation (18)]{blackmore2011chance}) which ensure the instantaneous probability of collision with half-space obstacles is bounded at discrete time steps. 
%In , the safety constraints are imposed through a set of  deterministic constraints (e.g., see \cite[Equation (18)]{blackmore2011chance}) that ensures the instantaneous probability of collision with half-space obstacles is bounded at discrete time steps. 
The probability of collision with polyhedral obstacles is bounded %imposed 
using probabilistic bounds like  Boole's inequality \cite{ono2015chance}. Constraints that aim to bound the collision probability in continuous time, that is, in the transition between discrete-time steps, are less explored. \cite{ariu2017chance} and \cite{oguri2019convex} are among a few papers that studied continuous-time safety constraints using the reflection principle and cumulative Lyapunov exponent formulation, respectively. Yet, these papers only provide constraints to bound the probability of collision with half-space obstacles.

In this work, we derive novel safety constraints that directly bound the probability of collision with polyhedral obstacles in both discrete and continuous time. Our derivation is obtained using  strong duality and the theorem of alternatives \cite{boyd2004convex}. Using the proposed safety constraints, we formulate the minimum sensing path problem explicitly. We partially convexify the resulting optimization and represent it as a difference of convex (DOC) program. By exploiting the DOC form, we devise an iterative algorithm that starts from the feasible solution obtained by IG-RRT* at stage one, and monotonically %iterates and 
smooths the path toward a locally optimal path using gradient-based solvers. The devised algorithm is the implementation of the convex-concave procedure (CCP) \cite{yuille2003concave} for the formulated problem.

\emph{Notation}:  We write vectors in lower case $x$ and matrices in upper case $X$. Let $\mathbb{S}_{+}^d=\left\{P\in\mathbb{R}^{d \times d}: P=P^\top \succ 0 \right\}$ and $\| x \|$ denote the 2-norm of $x$. For positive integers $i<j$, $[i:j]$ denotes the set $\{i,\; i+1,\; \dots,\; j\}$ and  $[j]=[1:j]$. $\bm{x} \sim \mathcal{N}(x,P)$ denotes a Gaussian random variable with mean $x$ and covariance $P$. The unit interval $\{a \in \mathbb{R}: 0\leq a \leq 1\}$ is denoted by $int$ and the zero vector in $\mathbb{R}^f$ by $0_f$. For vectors $\geq$ denotes element-wise inequality, e.g. $a\geq 0_f$.
\vspace{-0.2cm}
\section{Preliminaries}
\label{sec:Prelim}
As in \cite{pedram2021rationally} and \cite{pedram2022gaussian}, we model the minimum sensing navigation of a robot as a shortest path problem in Gaussian belief space $\mathbb{R}^d\times \mathbb{S}^d_{+}$. In this setting, we search for the optimal joint control-sensing plan that drives the robot from an initial state to a target region while achieving the minimum steering cost (defined in Subsection~\ref{sec:cost}) and a bounded probability of collision with obstacles.  
\vspace{-0.2cm}
\subsection{Assumed Dynamics}
We assume that the robot's dynamics are governed by a controlled Ito process
\vspace{-0.2cm}
\begin{equation}
\label{eq:dyn_cont}
    d\bm{x}(t)=\bm{v}(t) dt+ N^{\frac{1}{2}} d\bm{b}(t), 
\end{equation}
%\vspace{-0.2cm}
where $\bm{x}(t) \in \mathbb{R}^d$ is the state  of the robot at time $t$ with initial $\bm{x}(0)\sim\mathcal{N}(x_0, P_0)$, $\bm{v}(t)$ is the velocity input command, $\bm{b}(t)$ is  $d$-dimensional standard
Brownian motion, and $N \in \mathbb{S}^d$ is the process noise intensity.\footnote{ In practice, the path planning strategy we propose is applicable even if the robot's actual dynamics are different from  
\eqref{eq:dyn_cont}. See \cite[Section~II.A]{pedram2022gaussian} for further discussion.} We assume that the robot is commanded at constant periods of $\Delta t$, and the control input is applied using a zero-order hold converter. Under these assumptions, \eqref{eq:dyn_cont} can be discretized via
\vspace{-0.2cm}
\begin{equation}
\label{eq:dyn}
\begin{split}
    &\bm{x}_{k} = \bm{x}_{k-1}+ \bm{u}_{k-1}+\bm{w}_{k-1},
\end{split}
\end{equation}
%\vspace{-0.2cm}
where $\bm{x}_k$ is a state of the robot at time $t=k\Delta t $, $\bm{u}_{k-1}=\bm{v}(t_{k-1})\Delta t$, and $\bm{w}_{k-1}\sim\mathcal{N}(0, W := N \Delta t)$.

Let the probability distributions of the robot's state at
a given time step $k-1$ be parameterized by a Gaussian model $\bm{x}_{k-1} \sim \mathcal{N}(x_{k-1}, P_{k-1})$. After exerting the deterministic feed-forward control input $u_{k-1}$, the covariance propagates linearly to $\hat{P}_{k}=P_{k-1}+W$, and the 
robot's state becomes  $(x_{k}=x_{k-1}+u_{k-1}, \hat{P}_{k})$ prior to the measurement at $t_{k}$. After the measurement at $t_k$, the covariance is reduced to the posterior covariance $P_{k}\triangleq (\hat{P}_k^{-1}+S_k)^{-1}\preceq \hat{P}_{k}$, where $S_k\in \mathbb{S}_{+}^d$ is the information content of the measurement.\footnote{ See \cite[Section~III.E]{pedram2022gaussian} 
that explains how a given $S_k$ is realized by an appropriate choice of sensors.}

%by incorporating the appropriate sensor measurements.   

\subsection{Steering Cost}
\label{sec:cost}
The cost incurred during $t_{k-1}\rightarrow t_{k}$ is defined as the weighted sum of control cost $D_{\text{cont},k}$ and the information acquisition cost $D_{\text{info},k}$, as 
\[
\mathcal{D}_k\triangleq \mathcal{D}_{\text{cont},k}+ \alpha \mathcal{D}_{\text{info},k}, 
\]
where $\alpha$ is the weight factor. The control cost is simply the control input power $\mathcal{D}_{\text{cont},k} \triangleq \|u_{k-1}\|^2=\|x_{k}-x_{k-1}\|^2$.
The information cost is the entropy reduction incurred at the end of transition $t_{k-1}\rightarrow t_k$, defined as
$\mathcal{D}_{\text{info},k} \triangleq \frac{1}{2}\log\det \hat{P}_{k} - \frac{1}{2}\log\det P_{k}$.

\subsection{Collision Constraints}

We assume the path planning is to be performed inside a polyhedral domain $\mathcal{X}_{\text{dom}} := \{ x\in \mathbb{R}^d: a_{\text{dom},l}^\top x \leq b_{\text{dom},l}, l \in [L] \}$, which is filled with polyhedral obstacles
$\mathcal{X}_{\text{obs}}^m := \{ x\in \mathbb{R}^d: A_{\text{obs},m} \; x \leq b_{\text{obs},m}\}$  for $m\in [M]$. We assume the target region is also defined as 
polyhedral region $\mathcal{X}_{\text{tar}} := \{ x\in \mathbb{R}^d: a_{\text{tar},n}^\top x \leq b_{\text{tar},n}, n \in [N]\}$. Fig.~\ref{fig:sample_enviroment} shows a sample environment with $L=4$, $M=2$, and $N=4$.  

\begin{figure}[h]
    \centering
    \includegraphics[trim = 0.0cm 1cm 0cm 0.7cm, clip=true, width=0.9\columnwidth]{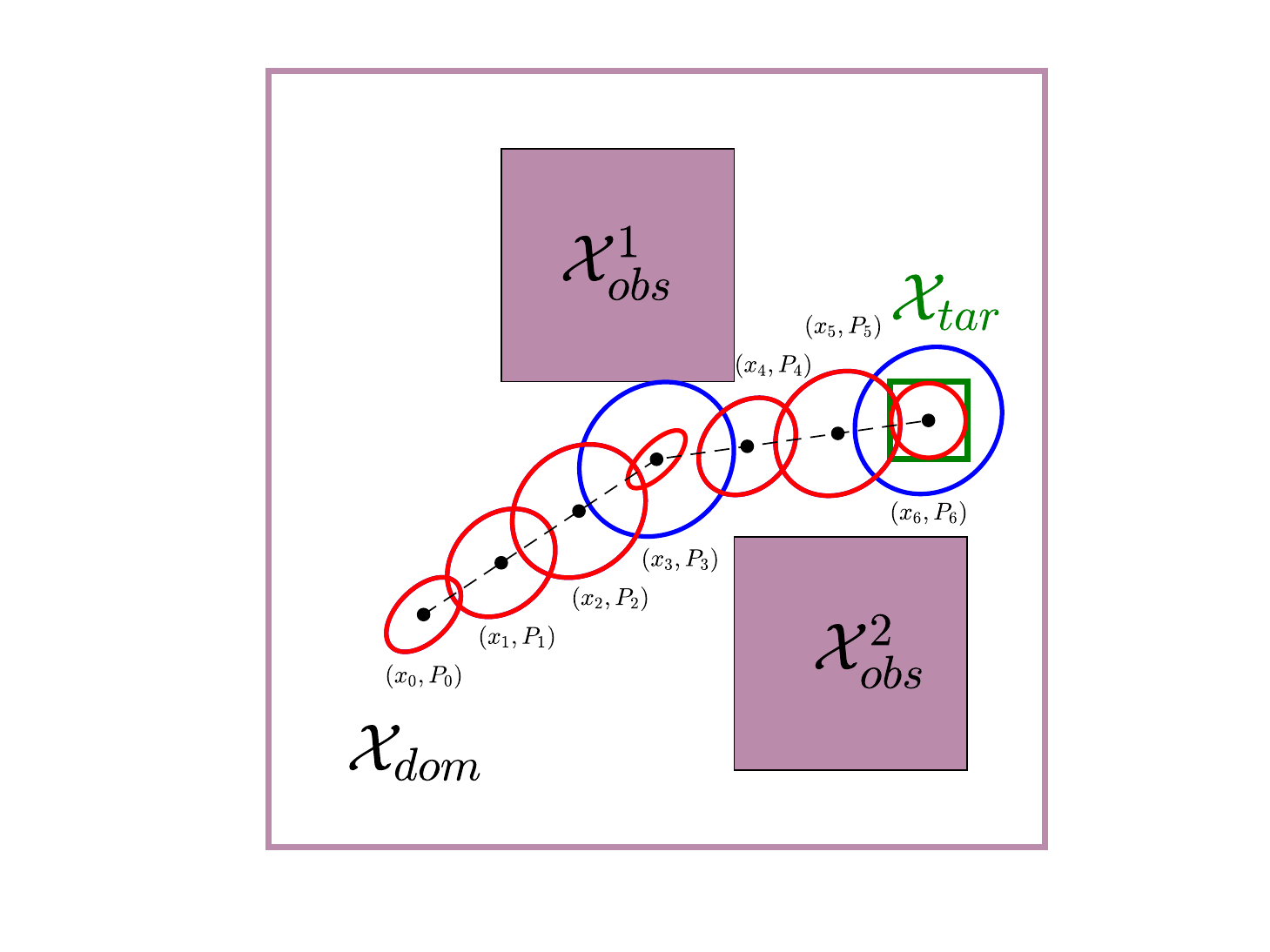}
    \caption{A sample path planning environment and a sample belief path with $6$ steps. The blue ellipses show the  $90\%$ prior confidence ellipses $\mathcal{E}_{\chi}(x_k,\hat{P}_k)$, whereas the red ones show the $90\%$ posterior confidence $\mathcal{E}_{\chi}(x_k,{P}_k)$. At time steps when no measurement is made, we have $P=\hat{P}$, and thus the prior and posterior ellipses are identical.}
    \label{fig:sample_enviroment}
\end{figure}
\vspace{-0.3cm}
We define the $\textup{Pr}\%$ confidence ellipse for a Gaussian distribution $\bm{x}\sim \mathcal{N}(x_i,P_i)$ as $\mathcal{E}_{\chi}(x_i,P_i) := \{{x}\in \mathbb{R}^d: ({x}-x_i)^\top P^{-1}({x}-x_i) \leq \chi^2(\textup{Pr})\}$, where $\chi^2({\textup{Pr}})$ is the $\textup{Pr}$-th quantile of the Chi-squared distribution. For simplicity, we use $\chi^2$ in the sequel. Fig.~\ref{fig:sample_enviroment} depicts a sample belief path with $6$ steps using confidence ellipses, where two measurements are performed at time-steps $k=3$ and $6$ with no measurement at all other time-steps $k$ (which means $S_k=0$ for $k\notin \{1,3\}$).
% In the sample path shown in Fig~\ref{fig:sample_enviroment}, two measurements are performed at time-steps $k=3$ and $6$ with no measurement at all other time-steps $k$ (which means $S_k=0$). 
\begin{definition}
For a fixed confidence level $\chi^2>0$,
we say a belief state $(x_i,P_i)$ is collision-free if the confidence ellipse $\mathcal{E}_{\chi}(x_i,P_i)$ has empty overlap with obstacles $\mathcal{X}_{\text{obs}}^m$ for all $m\in [M]$ and is contained in $\mathcal{X}_{\text{dom}}$. The collision-free constraint can be written as 
\vspace{-0.2cm}
\begin{align}
\label{eq:safe_a}
&x_{\text{obs}} \notin \mathcal{E}_{\chi}(x_i, P_i), \;\ \forall x_{\text{obs}}\in \mathcal{X}^m_{\text{obs}}, \; m\in M,\\
\label{eq:safe_b}
&\mathcal{E}_{\chi}(x_i,P_i) \subseteq \mathcal{X}_{\text{dom}}.
\end{align}
\end{definition}

At first glance, \eqref{eq:safe_a} and \eqref{eq:safe_b} seem to have two seperate mathematical forms. However, \eqref{eq:safe_b} can be written as
\begin{align*}
    x_{\text{out}} \notin \mathcal{E}_{\chi}(x_i,P_i), \; \forall x_{\text{out}}\in \mathcal{X}_{\text{out}},
\end{align*}
where $\mathcal{X}_{\text{out}} := \mathbb{R}^d \backslash \mathcal{X}_{\text{dom}}$. We can rewrite $ \mathcal{X}_{\text{out}}$ as the union of half-space ($1$-faced polyhedral) obstacles $\mathcal{X}_{\text{out}}^l=\{x\in \mathbb{R}^d: -a_{\text{dom},l}^\top x \leq -b_{\text{dom},l} \}$ for $l \in [L]$. Hence, \eqref{eq:safe_a} and \eqref{eq:safe_b} can be jointly written as
$x_{\text{o}} \notin \mathcal{E}_{\chi}(x_i, P_i), \;\ \forall x_{\text{o}}\in \mathcal{X}^j_{\text{O}}, \; j\in [J:=M+L]$,
% \begin{align} \label{eq:safe}
% &x_{\text{o}} \notin \mathcal{E}_{\chi}(x_i, P_i), \;\ \forall x_{\text{o}}\in \mathcal{X}^j_{\text{O}}, \; j\in [J:=M+L],
% \end{align}
where $\mathcal{X}^j_{\text{O}} \in \{\mathcal{X}^1_{\text{obs}}, \dots, \mathcal{X}^M_{\text{obs}}, \mathcal{X}^1_{\text{out}}, \dots, \mathcal{X}^L_{\text{out}}\}$.
% Thus, \eqref{eq:main_outer_info} can be cast as a series of constraints that imply an ellipse does not overlap with half-space regions, a subset of polytope regions. 

%The former constraints imply an ellipse is confined within an polytope, whereas the latter one implies an ellipse and polytope do not overlap. However, \eqref{eq:main_b_info} can be written as

%where $\mathcal{X}_{\text{tout}} = \mathbb{R}^d \backslash \mathcal{X}_{\text{tout}}$. We can rewrite $ \mathcal{X}_{\text{tout}}$ as union of half-space obstacles  $\mathcal{X}_{\text{tout}}^n=\{x\in \mathbb{R}^d: -a_{n}^\top x \leq -b_{n} \}$ for $m \in [M]$. Thus, \eqref{eq:main_outer_info} can be cast as a series of constraints that imply an ellipse does not overlap with half-space regions, a subset of polytope regions, as 
% \begin{align}
%     \label{eq:tar_ellipse}
%     x_{\text{tout}} \notin \mathcal{E}_{\chi}(x_{K},(Q_k+S_K)^{-1}),\; \forall x_{\text{tout}}\in \mathcal{X}^n_{\text{out}}, \; n\in [N].
% \end{align}

\begin{definition}
For a fixed confidence level  $\chi^2>0$, we say belief state $(x_i,P_i)$ is an admissible final state if $\mathcal{E}_{\chi}(x_i,P_i)$ is contained in $\mathcal{X}_{\text{tar}}$ (i.e., $\mathcal{E}_{\chi}(x_i,P_i) \subseteq \mathcal{X}_{\text{tar}}$). For instance, $\mathcal{E}_{\chi}(x_6,P_6)$ in Fig.~\ref{fig:sample_enviroment} is an admissible final belief state.
\end{definition}

Similar to $\mathcal{X}_{\text{out}}$,  region $\mathcal{X}_{\text{tout}}:= \mathbb{R}^d\backslash\mathcal{X}_{\text{tar}}$ can be thought as the union of half-space obstacles  $\mathcal{X}_{\text{tout}}^n=\{x\in \mathbb{R}^d: -a_{n}^\top x \leq -b_{n} \}$ for $n\in [N]$. Thus,  $\mathcal{E}_{\chi}(x_i,P_i)$ is an admissible final state iff $x_{\text{tout}} \notin \mathcal{E}_{\chi}(x_{i},P_i), \forall x_{\text{tout}}\in \mathcal{X}^n_{\text{tout}}, \; \forall n\in [N]$.
% \begin{align} \nonumber 
%     &. 
% \end{align}

Governed by dynamics \eqref{eq:dyn_cont}, it is easy to verify that the state of the robot in transition $t_{k-1}\rightarrow t_k$ can be parameterized by $s \in int$ as $(x_k[s]= x_{k-1}+ s \Delta x_{k} ,P[s] = P_{k-1}+ s W )$, where $\Delta x_{k}:= x_k-x_{k-1}$.

% \begin{align} \label{eq:dyn_trans}
%     x_k[s]&= x_{k-1}+ s \Delta x_{k},\;  \text{and}  \;   P_k[s]=P_{k-1}+ s W.
% \end{align}
% In \eqref{eq:dyn_trans}, 
\begin{definition}
\label{def:three}
For a fixed confidence level $\chi^2>0$, we say that the transition from $x_{k-1}$ to $x_{k}$ with initial covariance $P_{k-1}$ is collision-free if for all $s \in int$, $\mathcal{E}_{\chi}(x[s],P[s])$ has empty overlap with obstacles $\mathcal{X}_{\text{obs}}^m$ for all $m\in[M]$, and $\mathcal{X}_{\text{out}}$. Mathematically, it is equivalent to $x_{\text{o}} \notin \mathcal{E}_{\chi}(x_k[s], P_k[s]), \; \forall x_{\text{o}}\in \mathcal{X}^j_{\text{O}}, \; j\in [J],\; \forall s\in int$.
% \begin{align*}
% &x_{\text{o}} \notin \mathcal{E}_{\chi}(x_k[s], P_k[s]), \; \forall x_{\text{o}}\in \mathcal{X}^j_{\text{O}}, \; j\in [J],\; \forall s\in int.
% \end{align*}
\end{definition}

\vspace{-0.2cm}
\section{Problem Formulation}
\label{sec:Formulation}
Let's fix the number of time steps $K$ and confidence level $\chi^2$. Introducing information matrix $Q_{k}:=({P}_{k-1}+W)^{-1}= P_k^{-1}-S_k$, the shortest path problem with respect to the proposed steering cost can be formulated as
\vspace{-0.2cm}
\begin{subequations}
\label{eq:main_info}
\begin{align}
\nonumber
    \min \;\; & \sum_{k=1}^{K} \big(\|x_{k}-x_{k-1}\|^2+\frac{\alpha}{2} \log\det ({Q}_{k}+S_k)\!
    \\ \label{eq:main_a_info} 
    & \quad \quad  -\!\frac{\alpha}{2} \log\det Q_{k} \big) \\
    \label{eq:main_lossless_info}
    \text{s.t.} \;\; &Q_{k}^{-1} = (Q_{k-1}+S_{k-1})^{-1}+W, \;  \forall k \in [K],\\
    \nonumber 
    & x_{\text{o}} \notin \mathcal{E}_{\chi}(x_{k}[s],(Q_k+S_k)^{-1}[s]),\\ \label{eq:main_collision_info}
    & \forall k\in [K],\; \forall x_{\text{o}}\in \mathcal{X}^j_{\text{O}}, \; \forall j\in [J],\; \forall s \in int\\
    \nonumber
     &x_{\text{tout}} \notin \mathcal{E}_{\chi}(x_{K},(Q_K+S_K)^{-1}),\\ \label{eq:main_final_info}
     &\forall x_{\text{tout}}\in \mathcal{X}^n_{\text{tout}}, \; \forall n\in [N],
\end{align}
\end{subequations} where the minimization is performed over $\{x_k, {Q}_k, S_k\}_{k=1}^K$, and  $x_0$ and $Q_0+S_0 := P_{\text{0}}^{-1}$ are given.
Constraint \eqref{eq:main_lossless_info} is the Kalman filter iteration, \eqref{eq:main_collision_info} states all transitions $t_{k-1}\rightarrow t_{k}$ are safe (collision-free), and  \eqref{eq:main_final_info} ensures that final belief is an admissible final state.  
% For a fixed number of time steps $K$ and confidence level $\chi^2$, the shortest path problem with respect to the proposed steering cost can be formulated as 
% \begin{subequations}
% \label{eq:main}
% \begin{align}
% \nonumber
%     \min \;\; & \sum_{k=1}^{K} \big(\|x_{k}-x_{k-1}\|^2+\frac{\alpha}{2} \log\det ({P}_{k-1}+W)\!
%     \\ \label{eq:main_a}
%     & \quad \quad  -\!\frac{\alpha}{2} \log\det P_{k} \big) \\ \label{eq:main_lossless} 
%     \text{s.t.} \;\; & P_{k}^{-1} = ({P}_{k-1}+W)^{-1}\!\!+S_k,  \forall k \in [K],\\ \nonumber 
%     & x_{\text{o}} \notin \mathcal{E}_{\chi}(x_{k}[s],{P}_k[s]), \\ \label{eq:main_collision}
%     & \forall k\in [K],\; \forall x_{\text{o}}\in \mathcal{X}^j_{\text{O}}, \; j\in [J],\; \forall s \in int\\
%     \label{eq:main_final}
%     &x_{\text{tout}} \notin \mathcal{E}_{\chi}(x_{K},P_K), \forall x_{\text{tout}}\in \mathcal{X}^n_{\text{tout}}, \; n\in [N],
% \end{align}
% \end{subequations} where the minimization is performed over $\{x_k, P_k, S_k\}_{k=1}^K$ and both $x_0$ and $P_0$ are given. 
%\subsection{Formulation in the Space of Information Matrices}
 We can define a relaxation of problem \eqref{eq:main_info} via
 \vspace{-0.2cm}
\begin{subequations}
\label{eq:prob_2}
\begin{align}
    \min \;\; & \eqref{eq:main_a_info}\\ \label{eq:relaxed_cons}
    \text{s.t.} \;\;\;& Q_{k}^{-1} \succeq (Q_{k-1}+S_{k-1})^{-1}+W, \; \forall k \in [K],\\
    &  \eqref{eq:main_collision_info} \; \text{and} \;\eqref{eq:main_final_info}.
    \end{align}
\end{subequations}

The following lemma proves that the optimal solution of Problem \eqref{eq:prob_2}  is also an optimal solution of Problem \eqref{eq:main_info}. Solving problem \eqref{eq:prob_2} has computational advantage over solving problem \eqref{eq:main_info}, as constraint  \eqref{eq:relaxed_cons} is convex whereas \eqref{eq:main_lossless_info} is not. More precisely, \eqref{eq:relaxed_cons} can be written as a linear matrix inequality (LMI)    
\vspace{-0.2cm}
\begin{align}
\label{eq:kf}
\begin{bmatrix}
        Q_{k} & Q_{k} & Q_{k}W\\ 
        Q_{k} & Q_{k-1}+S_{k-1} & 0\\
        W Q_{k} & 0 & W
\end{bmatrix} \succeq 0, \forall k \in [K].
\end{align}
\vspace{-0.2cm}
\begin{lemma}
\label{lemma_relax}
\textup{Let $(x^*_k, Q^*_k, S^*_k)$ be 
an optimizer for \eqref{eq:prob_2}. Then,
\vspace{-0.2cm}
\begin{align}\label{eq:relax}
Q_{k}^{*-1} = ({Q}_{k-1}^{*}+S^*_{k-1})^{-1}+W,\quad \forall k\in [K].
\end{align}
starting from ${Q}^{*}_0+S^{*}_0 =P_{0}^{-1}$.}
\end{lemma}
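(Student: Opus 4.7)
I would prove the claim by contradiction. Suppose some optimizer $(x^*_k, Q^*_k, S^*_k)$ of \eqref{eq:prob_2} violates \eqref{eq:relax} at some index $k \in [K]$, so that $Q_k^{*-1} \succ (Q_{k-1}^* + S_{k-1}^*)^{-1} + W$. Define the Loewner upper bound $\hat{Q}_k := \bigl((Q_{k-1}^* + S_{k-1}^*)^{-1} + W\bigr)^{-1}$; by hypothesis $Q_k^* \prec \hat{Q}_k$ strictly. I would then construct a candidate that raises $Q_k$ to this upper bound while leaving every other decision variable intact: set $\tilde{Q}_k := \hat{Q}_k$ and $\tilde{Q}_j := Q_j^*$, $\tilde{S}_j := S_j^*$, $\tilde{x}_j := x_j^*$ for all other indices.

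First I would verify feasibility of the candidate. The relaxed Kalman LMI \eqref{eq:relaxed_cons} at step $k$ becomes an equality by construction, and at step $k+1$ the constraint $Q_{k+1}^{*-1} \succeq (\tilde{Q}_k + S_k^*)^{-1} + W$ is preserved because $\tilde{Q}_k \succ Q_k^*$ only shrinks its right-hand side relative to the original, which was already satisfied. All other Kalman constraints are unchanged. For the collision constraints \eqref{eq:main_collision_info} and the terminal constraint \eqref{eq:main_final_info}, the only quantity that changes is the posterior information $Q_k + S_k$, which strictly enlarges; hence every confidence ellipse that depends on it (or on any downstream propagation of it) shrinks, and these constraints can only become easier. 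Constraints at indices $j \neq k$ are untouched by the modification.

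Second I would show that the objective strictly decreases. The only term in \eqref{eq:main_a_info} that depends on $Q_k$ is the information cost $g(Q) := \frac{\alpha}{2}\bigl(\log\det(Q + S_k^*) - \log\det Q\bigr)$. Applying Sylvester's determinant identity yields $g(Q) = \frac{\alpha}{2}\log\det\bigl(I + (S_k^*)^{1/2} Q^{-1} (S_k^*)^{1/2}\bigr)$, and since matrix inversion is order-reversing on $\mathbb{S}_{+}^d$, the eigenvalues of $(S_k^*)^{1/2} Q^{-1} (S_k^*)^{1/2}$ are non-increasing as $Q$ grows in the Loewner order. Thus $g$ is non-increasing on $\mathbb{S}_{+}^d$ and strictly decreasing whenever $S_k^* \succ 0$. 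Since $\hat{Q}_k \succ Q_k^*$ strictly, one obtains $g(\hat{Q}_k) < g(Q_k^*)$ in the non-degenerate case, contradicting optimality of $(x^*, Q^*, S^*)$.

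The main obstacle I anticipate is the degenerate case $S_k^* = 0$, where $g \equiv 0$ and the modification only exhibits an alternative optimizer satisfying \eqref{eq:relax} at step $k$ without strictly improving the cost. In that situation I would read the lemma as a without-loss-of-generality statement: applying the replacement sequentially at the smallest offending index produces an optimizer satisfying \eqref{eq:relax} at every $k \in [K]$, the base case $Q_0^* + S_0^* = P_0^{-1}$ being supplied by the problem data.
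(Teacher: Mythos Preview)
Your proposal is correct and follows the same contradiction strategy as the paper: build a feasible point that tightens the Kalman inequality and show the objective does not increase. The paper differs in one technical choice: rather than lifting a single $Q_k$ to its Loewner upper bound, it replaces the \emph{entire} sequence $\{Q_k\}$ by the forward recursion $Q_k^{**-1}=(Q_{k-1}^{**}+S_{k-1}^*)^{-1}+W$ and proves by induction that $Q_k^{**}\succeq Q_k^*$ for all $k$. This global replacement avoids the need to iterate your single-index fix downstream. For the monotonicity of the information term the paper invokes the matrix determinant lemma to write $\log\det(Q_k+S_k)-\log\det Q_k=\log\det S_k+\log\det(Q_k^{-1}+S_k^{-1})$, which tacitly needs $S_k$ invertible; your Sylvester identity $\log\det\bigl(I+(S_k^*)^{1/2}Q^{-1}(S_k^*)^{1/2}\bigr)$ is the more robust form and in fact gives strict decrease whenever $S_k^*\neq 0$, not only when $S_k^*\succ 0$. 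Your explicit treatment of the degenerate case $S_k^*=0$ (reading the lemma as a without-loss-of-generality statement) is something the paper glosses over, since it only concludes $J^{**}\le J^*$ without addressing when the inequality is strict.
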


\begin{proof}
The proof is based on contradiction. Assume the optimal value of Problem \eqref{eq:prob_2} is  attained by $(x^*_k, {Q}^*_k, S^*_k)$ as $J^*$,  where \eqref{eq:relax} does not hold. We consider the set  $(x^*_k, {Q}^{**}_k, S^*_k)$, where
\vspace{-0.1cm}
\begin{align*}
{Q}_{k}^{**-1} = ({Q}_{k-1}^{**}+S^*_{k-1})^{-1}+W,\quad \forall k\in [K],
\end{align*}
starting again from ${Q}_0^{**}+S^*_{0}=P_0^{-1}$,
and show  $(x^*_k, {Q}^{**}_k, S^*_k)$ is a feasible solution to Problem (\ref{eq:prob_2}) that attains a lower value $J^{**}\leq J^{*}$.

\begin{claim}
\textup{For $k \in [K]$, we have}
\vspace{-0.2cm}
\begin{align}\label{eq:recur}
    {Q}_{k}^{**}& \succeq {Q}_{k}^{*}.
\end{align}
\end{claim}
\begin{proof}
We proceed via induction on $k$. For $k=1$, we have ${Q}_{1}^{**-1} = {Q}_{1}^{*-1} = {P}_{0} + W$, and thus the base step %of the induction 
holds. We now assume \eqref{eq:recur} holds for $k=t$ and show \eqref{eq:recur} holds for $k=t+1$. From ${Q}_{t}^{**} \succeq {Q}_{t}^{*}$, it immediately follows that
${Q}_{t+1}^{**-1} = ({Q}_{t}^{**}+S^*_t)^{-1} + W \preceq  ({Q}_{t}^{*}+S_t^*)^{-1} + W \preceq  {Q}_{t+1}^{*-1}$,
% $({Q}_{t}^{**}+S^*_t)^{-1} \preceq ({Q}_{t}^{*}+S_t^*)^{-1}$, and
% \begin{equation}
%     , 
% \end{equation}
which establishes the claim for $k=t+1$.% as $ {Q}_{t+1}^{**} \succeq {Q}_{t+1}^{*}$.
\end{proof}

Claim~1 implies that $\mathcal{E}_{\chi}(x^*_{k}[s], ({Q}_{k}^{**}+S_k^{*})^{-1}[s]) \subseteq \mathcal{E}_{\chi}(x^*_{k}[s], ({Q}_{k}^{*}+S^*_k)^{-1}[s])$ and  $\mathcal{E}_{\chi}(x^*_{K},(\hat{Q}_{K}^{**}+S^*_K)^{-1}) \subseteq \mathcal{E}_{\chi}(x^*_{K},(\hat{Q}_{K}^{*}+S^*_K)^{-1}), $ which proves $(x^*_k, {Q}^{**}_k, S^*_k)$ satisfies constraints \eqref{eq:main_collision_info}, and \eqref{eq:main_final_info}. Constraint \eqref{eq:main_lossless_info} is also satisfied trivially which leads to the conclusion that $(x^*_k, {Q}^{**}_k, S^*_k)$ is a feasible solution for \eqref{eq:main_info}. On the other hand, using the matrix determinant lemma we have
\vspace{-0.2cm}
\begin{align} \nonumber
    &\log\det (Q_k+S_{k}) - \log\det Q_{k}\\ \label{eq:obj_decrease}
    &= \log\det S_k+ \log\det({Q}_k^{-1}+S_k^{-1}).
\end{align}
It is trivial to see \eqref{eq:obj_decrease} is decreasing function of ${Q}_k$, which proves $J^{**}\leq J^{*}$.
\end{proof}

Both constraints  \eqref{eq:main_collision_info}, and \eqref{eq:main_final_info} have to be held over a continuous domain (like $\forall x_{\text{o}}\in\mathcal{X}_{\text{O}}^j$), which cannot be handled directly by standard gradient-based solvers. In the upcoming subsections, we derive equivalent conditions for these constraints using strong duality and the theorem of alternatives \cite{boyd2004convex}. 
\vspace{-0.2cm}
\subsection{Discrete-time Collision Constraint}
\begin{lemma}\label{lemma:overlap}
The ellipse $\mathcal{E}_{\chi}(x_i,Q_i^{-1})$ and the $f$-faced polyhedron $\mathcal{X}=\{ x: A x\leq b\}$  do not overlap if and only if $\exists \lambda \geq 0_{f}$ such that
\begin{align}
\label{eq:non_col}
-\lambda^\top A Q_i^{-1} A^\top \lambda + 2 \lambda^\top (A x_i- b) \geq \chi^2.
\end{align}
\end{lemma}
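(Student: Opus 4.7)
The plan is to recognize that the non-overlap condition is equivalent to a lower bound on the optimal value of a convex quadratic program, and then to apply Lagrangian duality to turn this ``for all $x$'' statement into an ``there exists $\lambda$'' statement, which is what \eqref{eq:non_col} gives.

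More concretely, I would first observe that $\mathcal{E}_{\chi}(x_i,Q_i^{-1})\cap \mathcal{X}=\emptyset$ if and only if
\[
p^\star := \min_{x}\;(x-x_i)^\top Q_i (x-x_i)\quad\text{s.t.}\quad Ax\leq b
\]
satisfies $p^\star \geq \chi^2$ (modulo boundary subtleties, which one can gloss over since in the application $\chi^2$ is a conservative confidence threshold). This is a convex QP with linear inequality constraints, so assuming $\mathcal{X}$ is nonempty Slater's condition is trivially satisfied (any interior point of $\mathcal{X}$ works, and if $\mathcal{X}$ has empty interior one can perturb), giving strong duality.

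Next I would form the Lagrangian $L(x,\lambda)=(x-x_i)^\top Q_i(x-x_i)+\lambda^\top(Ax-b)$ for $\lambda\geq 0_f$, set $\nabla_x L=0$ to obtain $x-x_i=-\tfrac{1}{2}Q_i^{-1}A^\top\lambda$, and substitute back to get the closed-form dual function
\[
g(\lambda)=-\tfrac{1}{4}\lambda^\top AQ_i^{-1}A^\top\lambda+\lambda^\top(Ax_i-b).
\]
By strong duality, $p^\star\geq \chi^2$ iff there exists $\lambda\geq 0_f$ with $g(\lambda)\geq\chi^2$. Finally I would reparametrize $\lambda\mapsto 2\lambda$ (which preserves the nonnegativity cone) to absorb the $\tfrac{1}{4}$ factor and obtain exactly \eqref{eq:non_col}.

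The main obstacle, such as it is, is twofold. First, one has to be careful that strong duality genuinely holds: for the ``only if'' direction one needs the dual to attain a value matching $p^\star$, which for a convex QP follows from standard results but does require a nonemptiness/feasibility check (if $\mathcal{X}=\emptyset$, the non-overlap condition is vacuous and any sufficiently large $\lambda$ in a recession direction of $\{Ax\leq b\}^c$, or a Farkas certificate of infeasibility, exhibits a suitable $\lambda$). Second, the strict vs.\ non-strict inequality at the boundary $p^\star=\chi^2$ needs a brief remark, since both the ellipse and the polyhedron are closed; in practice writing the non-overlap as $p^\star\geq\chi^2$ (interpreting ``overlap'' up to boundary tangency) aligns the conclusion with the stated form. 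Once these points are settled, the derivation is a clean Lagrangian duality computation.
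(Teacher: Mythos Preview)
Your proposal is correct and follows essentially the same route as the paper: cast non-overlap as a lower bound on the optimal value of the convex QP $\min_{Ax\le b}(x-x_i)^\top Q_i(x-x_i)$, invoke Slater/strong duality, and read off the closed-form dual. The only cosmetic difference is that the paper inserts a $1/2$ in the primal objective so the dual directly yields \eqref{eq:non_col}, whereas you obtain the $1/4$ factor and then rescale $\lambda\mapsto 2\lambda$; your extra remarks on the empty-polyhedron and boundary-tangency cases are more careful than what the paper records.
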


\begin{proof}
Invoking the definition of confidence ellipse, the absence of overlap between $\mathcal{E}_{\chi}(x_i,Q_i^{-1})$ and $\mathcal{X}$ can be written as 
\vspace{-0.2cm}
\begin{equation}
\nonumber
    \forall x \in \mathcal{X}, \quad  (x-x_i)^\top Q_i (x-x_i) \geq \chi^2,
\end{equation}
which is equivalent to the condition that the minimum value 
\begin{equation}
\label{eq:non_col_2}
     V^* \triangleq \min_{x:Ax\leq b}  (1/2)(x-x_i)^\top Q_i (x-x_i)
\end{equation}
is greater than $ \chi^2/2$. It is straightforward to see that the dual of \eqref{eq:non_col_2} is 
\vspace{-0.2cm}
\begin{equation}
    \max_{\lambda\geq 0} -(1/2) \lambda^\top A Q_i^{-1} A^\top \lambda+ \lambda ^\top (A x_i - b),
\end{equation}
where $\lambda \geq 0_f$ is the dual variable. The optimization problem \eqref{eq:non_col_2} is convex in $x$ for a given pair of $(x_i,P_i)$, and it is easy to verify that Slater's condition holds. Therefore, strong duality holds, implying that $V^* \geq \chi^2/2$ if and only if there exists a dual feasible solution ($\lambda \geq 0_f$) satisfying \eqref{eq:non_col}.  
\end{proof}

\begin{remark}
\label{remark:one}
If the polyhedron $\mathcal{X}$ is a half-space $\{x: a^\top x \leq b\}$,
 \eqref{eq:non_col} simplifies to $- (a^\top Q_i^{-1} a) \lambda^2 + 2(a^\top x_i- b) \lambda -\chi^2 \geq 0$,
which is a second order function of scalar $\lambda$. It is easy to verify that the maximum of $\frac{(a^\top x_i-b)^2}{a^\top Q_i^{-1} a}-\chi^2$ is obtained at $\lambda=\frac{a^\top x_i-b}{a^\top Q^{-1}a} \geq 0$. Hence, \eqref{eq:non_col} reduces to 
\vspace{-0.2cm}
\begin{align}
\label{eq:non_col2}
     a^\top x_i+ \sqrt{\chi^2 a^\top  Q_i^{-1} a} \leq b.
\end{align}
\end{remark}
%for half-space $\mathcal{X}$. 
Equation \eqref{eq:non_col2} was previously derived in \cite{van2002closed} and is extensively used in CC planners. Constraint \eqref{eq:non_col2} is not convex in $(x_i,Q^{-1}_i)$. In the following lemma, we derive an equivalent convex condition.  
\begin{lemma}
\label{eq:lemma:C_intro}
The relation \eqref{eq:non_col2} holds if and only if there exists a $ C\geq 0 $ such that
\begin{align}
\label{eq:lemma_3}
    \begin{bmatrix}
           b-a^\top x_i & 1\\
           1 & \gamma C 
    \end{bmatrix} \succeq 0, \quad \text{and} \quad 
    \begin{bmatrix}
           1 & Ca^\top\\
           aC &  Q_{i} 
    \end{bmatrix} \succeq 0,
\end{align}
where $\gamma= (\chi^2)^{-\frac{1}{2}}$.
\end{lemma}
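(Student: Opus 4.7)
The plan is to reduce each of the two LMIs in \eqref{eq:lemma_3} to a scalar inequality via a Schur complement, and then show that the resulting pair of scalar inequalities is jointly equivalent to \eqref{eq:non_col2} by choosing the auxiliary parameter $C$ optimally.

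First, I would apply the Schur complement to the $2\times 2$ block in \eqref{eq:lemma_3}. Since the $(2,2)$ entry $\gamma C$ must be nonnegative, and $\gamma = (\chi^2)^{-1/2}>0$, this forces $C\geq 0$. The PSD condition is then equivalent to $b - a^\top x_i \geq 0$ together with the determinant inequality $\gamma C (b-a^\top x_i) \geq 1$, i.e.
\begin{equation}\label{eq:plan1}
C\,(b - a^\top x_i) \;\geq\; \sqrt{\chi^2}.
\end{equation}
Next, I would apply the Schur complement to the $(d+1)\times (d+1)$ block in \eqref{eq:lemma_3} with respect to the scalar $1$ in the $(1,1)$ position. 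This yields $Q_i - C^2 aa^\top \succeq 0$, which after pre/post-multiplication by $Q_i^{-1/2}$ becomes a rank-one constraint equivalent to
\begin{equation}\label{eq:plan2}
C^2\, a^\top Q_i^{-1} a \;\leq\; 1.
\end{equation}

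For the ``only if'' direction, assume \eqref{eq:non_col2} holds. Setting aside the trivial case $a^\top Q_i^{-1} a = 0$ (handled by any $C\geq 0$ satisfying \eqref{eq:plan1}, which is possible since $b-a^\top x_i \geq 0$), choose the extremal value $C = 1/\sqrt{a^\top Q_i^{-1} a}$. Then \eqref{eq:plan2} holds with equality, and \eqref{eq:plan1} reduces to $b-a^\top x_i \geq \sqrt{\chi^2 \, a^\top Q_i^{-1} a}$, which is exactly \eqref{eq:non_col2}. For the ``if'' direction, suppose $C\geq 0$ satisfies both \eqref{eq:plan1} and \eqref{eq:plan2}. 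Since $\sqrt{\chi^2}>0$, \eqref{eq:plan1} forces $C>0$ and $b - a^\top x_i>0$; then dividing \eqref{eq:plan1} by $C$ and combining with $1/C \geq \sqrt{a^\top Q_i^{-1} a}$ (from \eqref{eq:plan2}) gives $b - a^\top x_i \geq \sqrt{\chi^2}/C \geq \sqrt{\chi^2\, a^\top Q_i^{-1} a}$, which is \eqref{eq:non_col2}.

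There is no serious obstacle here; the main care needed is in the Schur complement applied to the second LMI (verifying that $Q_i - C^2 aa^\top \succeq 0$ is equivalent to \eqref{eq:plan2} under $Q_i \succ 0$) and in the degenerate case $a^\top Q_i^{-1} a = 0$, where one must explicitly exhibit a feasible $C$. The intuition to keep in mind is that the auxiliary variable $C$ is essentially a slack encoding $C \approx 1/\sqrt{a^\top Q_i^{-1} a}$, and the two LMIs are a lifted linearization of the nonconvex constraint \eqref{eq:non_col2}.
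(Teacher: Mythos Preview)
Your proposal is correct and follows essentially the same route as the paper: both reduce the two LMIs via Schur complements to the pair of scalar conditions $a^\top Q_i^{-1} a \leq C^{-2}$ and $C(b-a^\top x_i)\geq \sqrt{\chi^2}$, then establish equivalence with \eqref{eq:non_col2} by choosing $C$ appropriately. Your write-up is in fact more careful than the paper's two-line argument, handling the degenerate case and spelling out both directions explicitly.
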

\begin{proof}
It is easy to verify that \eqref{eq:non_col2} holds if and only if  there exists a $ C \geq 0$ such that $a^\top Q_{i}^{-1}a \leq C^{-2}$ and $a^\top x_i + \gamma^{-1} C^{-1}< b$. By applying Schur complement lemma to these inequalities, relation \eqref{eq:lemma_3} is obtained.
\end{proof}

Using Lemma~\ref{eq:lemma:C_intro}, we can rewrite \eqref{eq:main_final_info} as a set of convex constraints 
\begin{subequations}
\label{eq:final_cond}
    \begin{align}
        &\begin{bmatrix}
           -b_{\text{tar},n}+a_{\text{tar},n}^\top x_K & 1\\
           1 & \gamma C_{n} 
    \end{bmatrix} \succeq 0, \; \forall n \in [N],\\
    &\begin{bmatrix}
           1 & -C_{n}a_{\text{tar},n}^\top\\
           -a_{\text{tar},n}C_{n} &  {Q}_{K}+S_{K} 
    \end{bmatrix} \succeq 0, \; C_n\geq 0, \;   \forall n \in [N].      
    \end{align}
\end{subequations}
\subsection{Continuous-time Collision Constraint}
Based on Definition~\ref{def:three}, we say a collision in transition $t_{k-1} \rightarrow t_k$ with polyhedral obstacle $\mathcal{X} =\{x: Ax\leq b\}$ is detected when
\vspace{-0.2cm}
\begin{equation*}
(x_k[s]-x)^\top P_k[s]^{-1}(x_k[s]-x) < \chi^2,
\end{equation*}
for some $ s \in int$ and $x\in \mathcal{X}$. Collision detection can be formulated as the feasibility problem w.r.t $s$ and $x$:
\begin{align}
%\nonumber
%\min_{s,\; x}\quad  & 0\\ \nonumber
%\text{s.t.} \quad & 
&\begin{bmatrix}
\chi^2 &  x_{k-1}^\top+ s \Delta x^\top_{k}-x^\top \\ \nonumber
x_{k-1}+ s \Delta x_{k}-x & P_{k-1}+s W
\end{bmatrix}\succ 0,\\ \label{eq:feas_prob}
&0\leq s \leq 1,  \quad A x \leq b, 
\end{align}
%\end{subequations}
which is a convex program  for arbitrary polyhedron $\mathcal{X}$. More precisely,  transition $t_{k-1} \rightarrow t_{k}$ is not in collision with $\mathcal{X}$ iff \eqref{eq:feas_prob} is infeasible. 
\begin{theorem}
\label{theo:one}
Problem \eqref{eq:feas_prob} is infeasible for $f$-faced polyhedral obstacle $\mathcal{X} =\{x: A x\leq b\}$  iff there exists a $\lambda\geq 0_f$ such that
\vspace{-0.2cm}
\begin{subequations}
\label{eq:tran_col}
\begin{align} \label{eq:tran_col_1}
    &-\lambda^\top  A P_{k-1} A^\top \lambda + 2\lambda^\top(A x_{k-1}-b)  \geq \chi^2,\\ \label{eq:tran_col_2}
    & -\lambda^\top  A (P_{k-1}+W) A^\top + 2\lambda^\top(A x_{k}-b) \geq \chi^2.
\end{align}
\end{subequations}
\end{theorem}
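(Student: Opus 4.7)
The plan is to introduce the scalar saddle function
$h(\lambda,s) := -\lambda^\top A(P_{k-1}+sW)A^\top \lambda + 2\lambda^\top\bigl(A(x_{k-1}+s\Delta x_k)-b\bigr)$
and to recast both sides of the ``iff'' as threshold statements about a minimax value of $h$. The main tool is the fact, inherited from Lemma~\ref{lemma:overlap}, that for any fixed $s$ the ellipse $\mathcal{E}_\chi(x_k[s],P_k[s])$ is disjoint from $\mathcal{X}$ iff $\max_{\lambda\geq 0_f} h(\lambda,s) \geq \chi^2$. Notice that $h(\lambda,s)$ is \emph{affine} in $s$ for every fixed $\lambda$, because $P_k[s]=P_{k-1}+sW$ and $x_k[s]=x_{k-1}+s\Delta x_k$ enter $h$ linearly; this single observation powers both directions.

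For the ``if'' direction, suppose $\lambda\geq 0_f$ satisfies \eqref{eq:tran_col_1} and \eqref{eq:tran_col_2}, i.e.\ $h(\lambda,0)\geq \chi^2$ and $h(\lambda,1)\geq \chi^2$. Affineness gives $h(\lambda,s) = (1-s)h(\lambda,0) + s\,h(\lambda,1) \geq \chi^2$ for every $s\in [0,1]$. Applying Lemma~\ref{lemma:overlap} with $Q_i^{-1}=P_k[s]$ at each $s$, the ellipse $\mathcal{E}_\chi(x_k[s],P_k[s])$ is disjoint from $\mathcal{X}$, so no triple $(s,x)$ verifies \eqref{eq:feas_prob}.

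For the ``only if'' direction, infeasibility of \eqref{eq:feas_prob} is exactly $\min_{s\in[0,1]}\max_{\lambda\geq 0_f} h(\lambda,s)\geq \chi^2$ by Lemma~\ref{lemma:overlap}. The desired conclusion, again by affineness of $h(\lambda,\cdot)$ (which makes $\min_{s\in[0,1]} h(\lambda,s)=\min\{h(\lambda,0),h(\lambda,1)\}$), is equivalent to the swapped inequality $\max_{\lambda\geq 0_f}\min_{s\in[0,1]} h(\lambda,s)\geq \chi^2$. I would establish the minimax equality through Sion's theorem: $h(\cdot,s)$ is concave (its Hessian in $\lambda$ is $-2AP_k[s]A^\top \preceq 0$) and continuous, $h(\lambda,\cdot)$ is affine, the $s$-domain $[0,1]$ is compact and convex, and the $\lambda$-domain $\{\lambda\geq 0_f\}$ is convex.

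The main obstacle is that the $\lambda$-domain is not compact, so Sion's theorem does not apply verbatim. I would resolve this by noting that for each $s$ the inner supremum is attained at the bounded dual optimizer identified in the proof of Lemma~\ref{lemma:overlap} (the Euclidean projection of $x_k[s]$ onto $\mathcal{X}$ under the $P_k[s]^{-1}$ metric), and by combining compactness of $s\in[0,1]$ with continuity in $s$ of these optimizers one restricts $\lambda$ to a sufficiently large compact box on which Sion applies. An alternative route, closer in spirit to the ``strong duality and theorem of alternatives'' tools advertised by the paper, is to directly invoke strong duality for the convex feasibility program \eqref{eq:feas_prob}, whose LMI is jointly affine in $(s,x)$: the multiplier attached to $Ax\leq b$ is the desired $\lambda$, while the multipliers attached to $0\leq s\leq 1$ end up enforcing precisely the two endpoint inequalities \eqref{eq:tran_col_1}--\eqref{eq:tran_col_2} after using the affineness of $h$ in $s$ one more time.
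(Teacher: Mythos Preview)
Your argument is correct in spirit but follows a genuinely different route from the paper. The paper does not go through Lemma~\ref{lemma:overlap} pointwise in $s$; instead it dualizes the SDP feasibility program \eqref{eq:feas_prob} directly in the joint primal variables $(s,x)$. Writing the Lagrangian with a matrix multiplier $M\succeq 0$ for the LMI, a scalar $\gamma\geq 0$ for $s\leq 1$, and $\lambda\geq 0$ for $Ax\leq b$, stationarity in $x$ forces the off-diagonal block of $M$ to equal $-\tfrac{1}{2}A^\top\lambda$; maximizing out $\gamma$ then produces a $\min\{\cdot,\cdot\}$ of two expressions, and optimizing the remaining block $M_2$ with the theorem of alternatives delivers \eqref{eq:tran_col_1}--\eqref{eq:tran_col_2} simultaneously. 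So where you first reduce to a one-parameter family of quadratic duals via Lemma~\ref{lemma:overlap} and then swap $\min_s$ and $\max_\lambda$, the paper keeps $s$ as a primal variable throughout and lets a single dualization manufacture the two endpoint inequalities. Your approach is more conceptual (the affineness of $h(\lambda,\cdot)$ in $s$ is the whole story), whereas the paper's is more computational but never needs a separate minimax or attainment argument; the ``alternative route'' you sketch at the end is essentially what the paper actually does.

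One small correction: Sion's theorem \emph{does} apply verbatim, since it only requires one of the two domains (here the $s$-interval $[0,1]$) to be compact. The real issue is that Sion yields $\sup_\lambda \min_s h = \min_s \sup_\lambda h$, and you still owe attainment of the outer supremum. Your compactness fix can be made rigorous (the superlevel sets $\{\lambda\geq 0:\min_s h(\lambda,s)\geq c\}$ for $c>0$ are bounded modulo directions $\mu\geq 0$ with $A^\top\mu=0$ and $\mu^\top b=0$, along which $h$ is constant), but the paper's direct dualization sidesteps this entirely.
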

\begin{proof}
Based on the theorem of alternatives; see Appendix~\ref{sec:AppenI} for details.
\end{proof}

The conditions \eqref{eq:tran_col_1} and \eqref{eq:tran_col_2} are similar to \eqref{eq:non_col}.
They imply that the neither the initial ellipse $\mathcal{E}(x_{k-1},P_{k-1})$ (when $s=0$) nor the final ellipse $\mathcal{E}(x_{k},\hat{P}_{k})$ (when $s=1$)  overlap with $\mathcal{X}$. However, 
\eqref{eq:tran_col} is stronger than the condition implying that initial and final confidence ellipses in transition $t_{k-1} \rightarrow t_{k}$ are separately collision-free because  \eqref{eq:tran_col_1} and \eqref{eq:tran_col_2} should be satisfied for a common $\lambda$.

Constraint \eqref{eq:tran_col_2} has a general non-convex form. Nevertheless, by writing \eqref{eq:tran_col_2} in terms of information matrices and introducing a new slack variable $R\geq 0$, we can reformulate it as an LMI and a DOC constraint as
\begin{subequations}
\label{eq:doc}
\begin{align}
    &h_1(R,\lambda,Q_{k})\!\triangleq \!\!\begin{bmatrix}
    R &  \lambda ^\top A\\
    A^\top\lambda & {Q}_{k}
    \end{bmatrix}\!\succeq 0, 
     \\ \nonumber &h_2(R,x_{k},\lambda)\triangleq\!R\!+\!\!\chi^2\!+\!\|A x_{k}-b\|^2\!+\!\lambda^2\!
     \\
     &\qquad \qquad \qquad -\!||Ax_{k}\!\!-\!b+\lambda||^2\!\leq 0.
    %\\ &\hat{R}+\!\chi^2\!+\|A x_{k}\|^2+\lambda^2-||Ax_{k}-b+\lambda||^2 \leq 0.
\end{align}
\end{subequations}
An analogous reformulation can be performed for \eqref{eq:tran_col_1}.
%We leverage DOC form in Section~\ref{sec:Algo}.to propose an algorithm to solve \eqref{eq:prob_2}.  

\section{Algorithm}
\label{sec:Algo}
Substituting  \eqref{eq:final_cond} and \eqref{eq:doc}, Problem \eqref{eq:prob_2} becomes 
\vspace{-0.2cm}
%can be written as 
\begin{subequations}
\label{eq:main_doc}
\begin{align}
\label{eq:main_a_doc}
    \min \;\; & \eqref{eq:main_a_info}\\
    % \sum_{k=1}^{K} \big(\|x_{k}-x_{k-1}\|^2 \!
    %& +\frac{\alpha}{2} \log\det({Q}_{k}+S_k)-\!\frac{\alpha}{2} \log\det Q_{k} \big) \\
    \label{eq:main_kf_tar_doc}
   \text{ s.t.} \;\;&  \eqref{eq:kf} \; \text{and} \; \eqref{eq:final_cond},
   \\ \label{eq:main_c_doc}
   &\begin{bmatrix} 
    2 \lambda_{1,j}^\top (A_jx_0-b_j)-\chi^2 &  \lambda_{1,j}^\top A_j\\
    A_j^\top\lambda_{1,j} & {Q}_{0}+S_{0}
    \end{bmatrix} \succeq 0,\\% \; \forall j \in [J], 
    %   \\ 
    % \nonumber
    % &\begin{bmatrix}
    % R_{k,j} &  \lambda_{k,j}^\top A_j\\
    % A_j^\top\lambda_{k,j} & {Q}_{k-1}+S_{k-1}
    % \end{bmatrix} \\
    \label{eq:main_d_doc}
    &h_1(R_{k,j}, \lambda_{k,j},Q_{k-1}\!+S_{k-1}) \!\succeq 0, \; \forall k\!\in\![2\!:\!K],\\%\forall j \in[J],
    \label{eq:main_e_doc}
    &h_1(\hat{R}_{k,j}, \lambda_{k,j},Q_{k}) \succeq 0,\; \forall k\in[K],\\ %\; \forall j \in[J], \\
    % &\begin{bmatrix}
    % \hat{R}_{k,j} &  \lambda_{k,j}^\top A_j\\
    % A_j^\top\lambda_{j,k} & {Q}_k
    % \end{bmatrix} \succeq 0,\\
   \label{eq:main_f_doc}
    & h_2(R_{k,j}, x_{k-1}, \lambda_{k,j}) \leq 0,\;
    \forall k\in[2\!:\!K], \\ %\forall j \in[J],\\
    % &R_{k,j}+\!\chi^2\!+\|A_j x_{k-1}-b_j\|^2+\lambda_{k,j}^2\!\\ \nonumber
    % &\quad \quad \quad -\!||A_jx_{k-1}-b_j+\lambda_{k,j}||^2 \leq 0,\\
    % &\forall k\in[2:K], \quad \forall j \in[J],
    \label{eq:main_g_doc}
    &h_2(\hat{R}_{k,j}, x_{k}, \lambda_{k,j}) \leq 0,
    \;\forall k\in[K],  %\quad \forall j \in[J],
    % &\hat{R}_{k,j}+\!\chi^2\!+\|A_j x_{k}-b_j\|^2+\lambda_{k,j}^2\\
    % \nonumber
    % & \quad \quad \quad -||A_jx_{k}-b_j+\lambda_{k,j}||^2 \leq 0,
\end{align}
\end{subequations}
%where the optimization is over
with variables $\{x_k,Q_k\succeq 0 ,S_k \succeq 0\}_{k=1}^K$, $\{C_n\geq 0\}_{n=1}^{N}$, $\{\lambda_{k,j}\geq 0, \hat{R}_{k,j} \geq 0\}_{k=1, j=1}^{K,J}$, and $\{ R_{k,j} \geq 0\}_{k=2, j=1}^{K,J}$. In \eqref{eq:main_doc}, we assumed $\mathcal{X}_{\text{O}}^j$ is defined as $\{x\in \mathbb{R}^d:A_j x \leq b_j\}$, and constraints \eqref{eq:main_c_doc}-\eqref{eq:main_g_doc} are imposed for all $j\in[J]$.  

In \eqref{eq:main_doc}, all terms in the objective function except $\log\det(Q_k+S_k)$, and all terms in constraints except
$-||A_jx_{k-1}-b_j+\lambda_{k,j}||^2$ in \eqref{eq:main_f_doc} and $-||A_jx_k-b_j+\lambda_{k,j}||^2$ in \eqref{eq:main_g_doc} are convex. These non-convex terms are negative of convex functions, meaning that \eqref{eq:main_doc} is a DOC problem.
%We utilize the concavity of  and deploy CCP .

A variety of  sequential quadratic programming (SQP)-based
approaches \cite{boggs1995sequential} could be used to solve the nonlinear program \eqref{eq:main_doc} to local optimality. However, it would be required to  artificially assume that the sequence of convex programs in the SQP solvers stay feasible. In contrast, if we apply CCP to a DOC problem like \eqref{eq:main_doc},  the concavity of the non-convex terms guarantees that the sequence of convex programs is feasible \cite{lipp2016variations}. Also, it is shown that CCP monotonically converges to a local optimum \cite{lipp2016variations}. 
%Nevertheless, we do not need to  artificially assume the sequence convex programs stay feasible, because   

\subsection{Convex-Concave Procedure (CCP)}
CCP is an iterative method that starts from a feasible solution of a DOC optimization program. It over-approximates concave terms (both in the objective function and constraints) in the program via affine functions obtained by linearization around a feasible solution. The resulting convex problem can then be solved using standard convex solvers. The linearization is then repeated around the obtained solution, and the iteration continues until the sequence of solutions converges to a locally optimal solution \cite{lipp2016variations}.

To implement CCP for \eqref{eq:main_doc}, we can linearize the functions  $h_3(Q,S)\triangleq \log\det (Q+S)-\log\det Q$ and $h_2(R,x,\lambda)$ around  a feasible solution $(\tilde{Q},\tilde{S}, \tilde{x}, \tilde{\lambda})$ by $\bar{h}_3(Q,S: \tilde{Q},\tilde{S}) = \log\det(\tilde{Q}+\tilde{S})-\log\det Q+ \textup{Tr}\left((\tilde{Q}+\tilde{S})^{-1} (Q+S- \tilde{Q}-\tilde{S})\right)$, and $\bar{h}_2(R, x,\lambda: \tilde{x},\tilde{\lambda}) = R\!+\!\!\chi^2\!+\!\|A x_{k}-b\|^2\!+\!\lambda^2 -
\|A\tilde{x}-b+\tilde{\lambda}\|^2 - 2
    (A\tilde{x}-b+\tilde{\lambda})^\top (A(x-\tilde{x})+\lambda-\tilde{\lambda})$, respectively.
Denoting the the solution obtained via the  CCP iteration $i$ as $\{{x}_k^i, {S}_k^i, {Q}^i_{k}\}_{k=1}^{K}$ and $\{\lambda_{k,j}^i\}_{k=1,j=1}^{K,J}$, at iteration $i+1$ we solve the convex problem
\begin{subequations}
%\label{eq:main_CCP}
\vspace{-0.4cm}
\begin{align}
%\label{eq:main_a_CCP}
\nonumber
    \min \;\; & \sum_{k=1}^{K} \|x_{k}-x_{k-1}\|^2 \! +\frac{\alpha}{2} \bar{h}_3(Q_k,S_k:Q^i_k,S^i_k)\\
    %\textup{Tr} ((Q^i_k+S^i_k)^{-1}(Q_k+S_k)) 
    % & \quad \quad -\frac{\alpha}{2} \log\det Q_{k} \big) \\
    %\label{eq:main_conv_ccp}
    \nonumber
   \text{ s.t.} \;\;&  \eqref{eq:main_kf_tar_doc}- \eqref{eq:main_e_doc},
    \\\nonumber
    % &R_{k,j}+\!\chi^2\!+\|A_j x_{k-1}-b_j\|^2+\lambda_{k,j}^2\\
    &\bar{h}_2(R_{k,j},x_{k-1},\lambda_{k,j}: x_{k-1}^i,\lambda_{k,j}^i) \leq 0, \\ \nonumber
    &\forall k\in[2\!:\!K], \forall j \in[J],\\
    % \nonumber
    % &\hat{R}_{k,j}+\!\chi^2\!+\|A_j x_{k}-b_j\|^2+\lambda_{k,j}^2\\
    \nonumber
    &\bar{h}_2(\hat{R}_{k,j}, x_{k},\lambda_{k,j}: x^i_{k}, \lambda_{k,j}^i)\leq 0, \\ \nonumber
    %\label{eq:main_collision_ccp}
    &\forall k\in[K], \quad \forall j \in[J].
\end{align}
\end{subequations}

\vspace{-0.4cm}
\subsection{Initialization}
The first stage of CCP starts from an initial feasible solution $\{x^0_k, S_k^0, Q^0_{k}\}_{k=1}^{K}$ and $\{\lambda_{k,j}^0\}_{k=1,j=1}^{K,J}$. However, the IG-RRT* algorithm does not explicitly provide a set of $\{\lambda_{k,j}^0\}_{k=1,j=1}^{K,J}$. In IG-RRT*, the collisions are checked through a numerical state-validator function (like any sampling-based method),  and not directly through \eqref{eq:tran_col}.
%A simple remedy for this issue is to find a set of feasible $\{\lambda_{k,j}^0\}_{k=1,j=1}^{K,J}$ for the path obtained by IG-RRT*. 
%start from a random (possibly infeasible) $\{\lambda_{k,j}^0\}_{k=1,j=1}^{K,J}$ and deploy \textit{penalty CCP}. In penalty CCP, terms penalizing constraint violation are added to the objective function (cf. e.g.\cite{pham2014recent}).
% The final locally optimal solution sought by penalty CCP is a function of the initial feasible solution, and thus starting from  random  $\{\lambda_{k,j}^0\}_{k=1,j=1}^{K,J}$ may result in a poor--quality final path. Instead of using penalty CCP, we propose to first find a set of feasible $\{\lambda_{k,j}^0\}_{k=1,j=1}^{K,J}$ and then use CCP without penalties. 
Nevertheless, a feasible $\lambda^0_{k,j}$ that satisfies \eqref{eq:tran_col}
for polyhedral obstacles $\mathcal{X}_{\text{o}}^{j}=\{x:A_j x\leq b_j\}$ and a feasible set $\{x^0_k, S^0_k, Q^0_{k}\}_{k=1}^K$ obtained from IG-RRT* can be sought by solving the convex feasibility problem  
%\vspace{-0.2cm}
\begin{align*}
\quad & \begin{bmatrix}
2 \lambda_{k,j}^\top (A_j x^0_{k-1}-b_j)-\chi^2 & \lambda_{k,j}^\top A_j\\
A_j^\top \lambda_{k,j} & Q_{k-1}^{0}+S^{0}_{k-1}
\end{bmatrix} \geq 0, \\
& \begin{bmatrix}
2 \lambda_{k,j}^\top (A_jx^0_k-b_j)-\chi^2 & \lambda^\top_{k,j} A_j\\
A_j^\top \lambda_{k,j} & Q^0_{k}
\end{bmatrix} \geq 0, 
\quad \lambda_{k,j} \geq 0, % \; \forall j \in [J].
\end{align*}
w.r.t $\lambda_{k,j}$.

\vspace{-0.4cm}
\section{Simulation Results}\label{sec:Results}
Fig.~\ref{fig:initial} shows a $1\text{m}\times 1\text{m}$ obstacle-filled environment, and the path plans obtained by the IG-RRT* algorithm with $N=500$ and $W=0.2\times 10^{-3} I_2$ for two values of $\alpha=0.1$ and $\alpha=1.0$. Fig.~\ref{fig:final} depicts the smoothed versions of these paths obtained after $15$ CCP iterations. A comparison between Fig.~\ref{fig:initial} and Fig.~\ref{fig:final} reveals the success of the proposed algorithm in smoothing the path plans obtained from the stochastic IG-RRT algorithm.  
%Starting from these paths, the paths obtained after smoothing for $15$ CCP iterations are depicted in .
The videos showing the evolution of the paths during the smoothing algorithm for $\alpha=0.1$ and $1.0$ are accessible at \href{https://youtu.be/ieUbd1uj-aE}{https://youtu.be/ieUbd1uj-aE} and \href{https://youtu.be/nNZn4GGbbWs}{https://youtu.be/nNZn4GGbbWs}, respectively. Fig.~\ref{fig:cost} demonstrates the monotonic reduction of steering costs in the sequence of path plans obtained in CCP iterations.  
\begin{figure}[ht!]%\label{fig:exp_snap}
\vspace{-0.5cm}
    \centering
    \subfloat[$\alpha=0.1$.]
    {\includegraphics[trim = 2.1cm 0cm 0.0cm 0.4cm, clip=true, width=0.55\columnwidth]{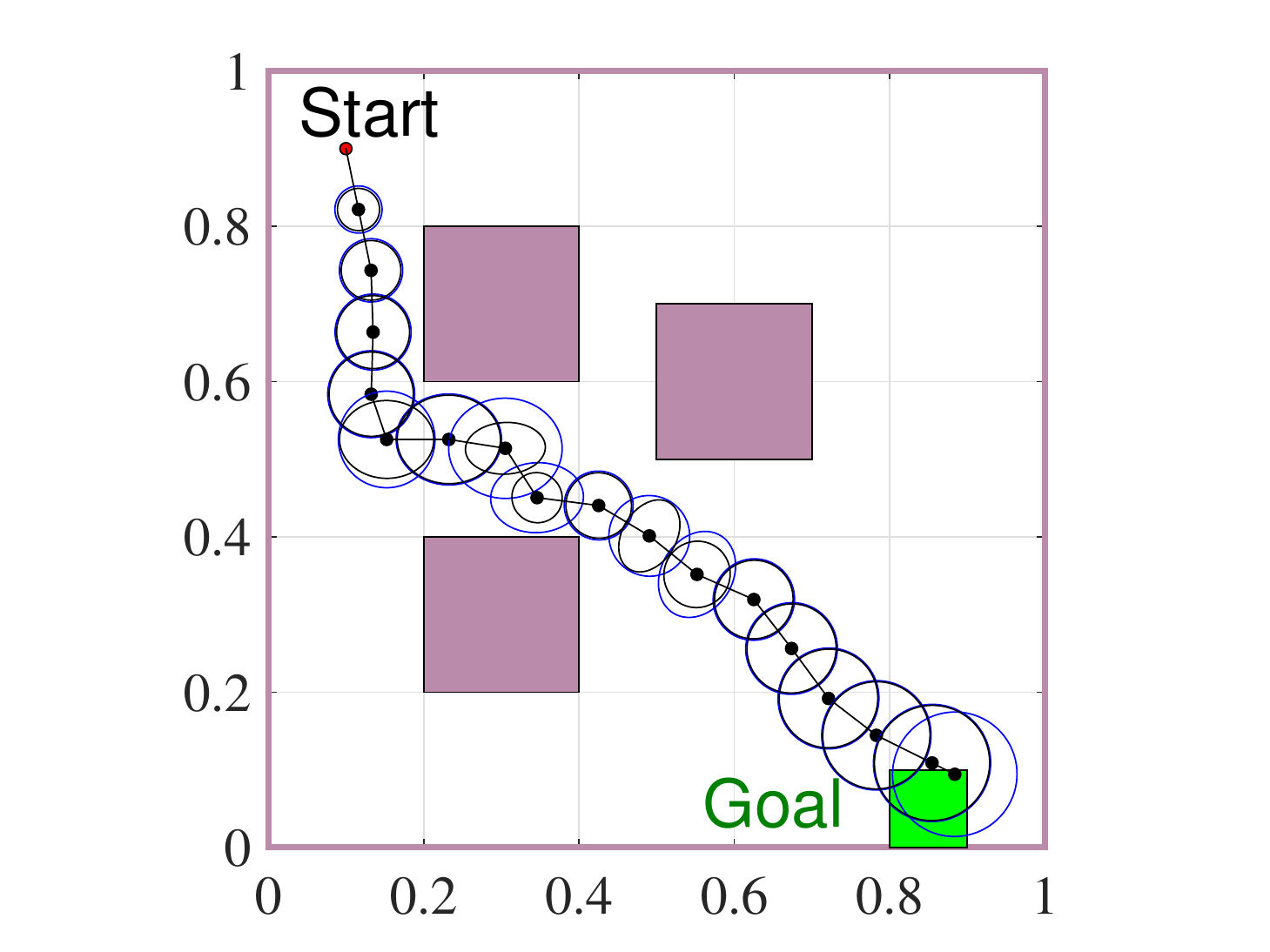}
    \label{fig:init_01}} 
    \subfloat[$\alpha=1.0$.] 
    {\includegraphics[trim =2.1cm 0cm 0.0cm 0.4cm, clip=true, width=0.55\columnwidth]{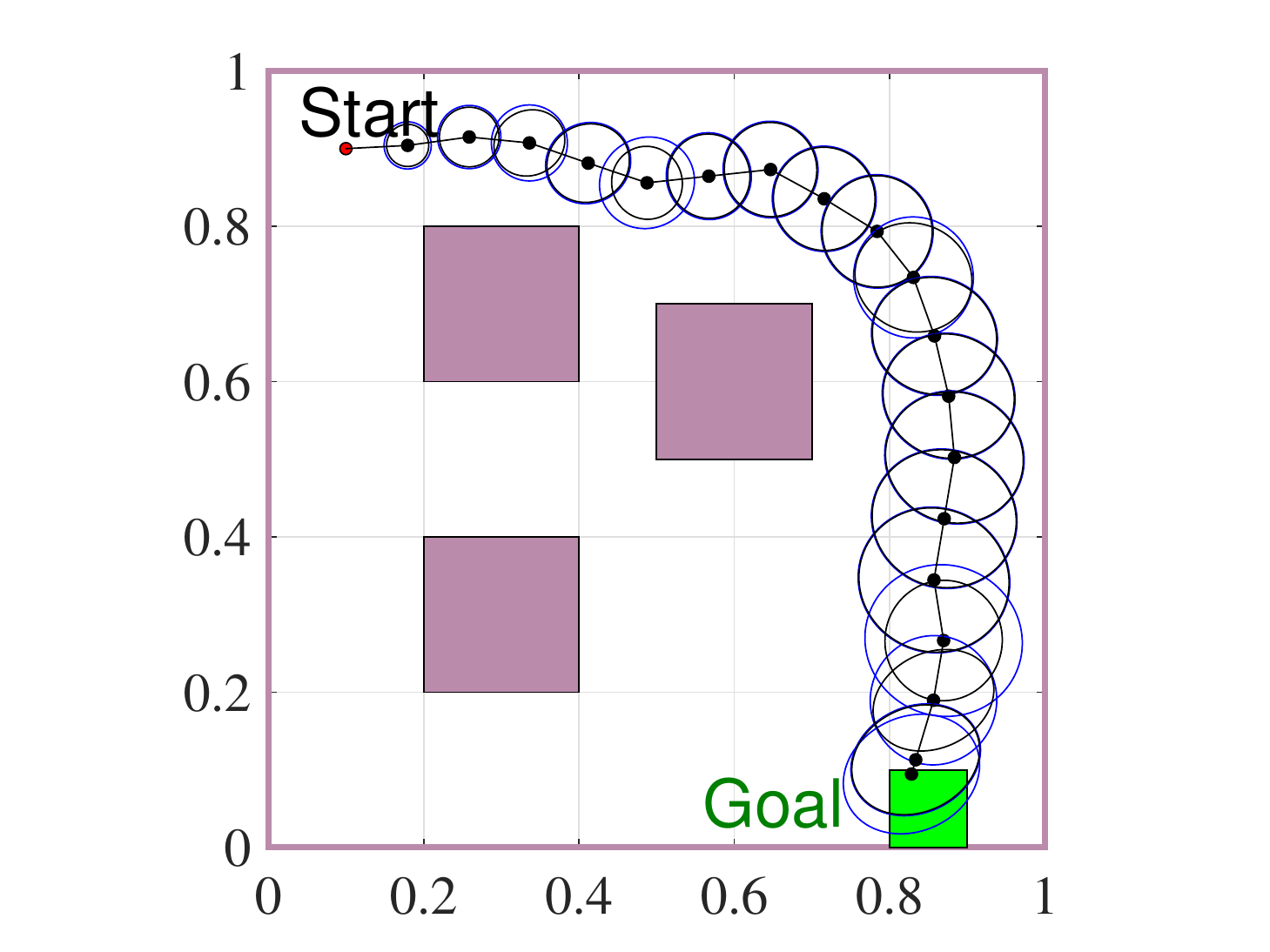}
    \label{fig:init_1}}
    \caption{Path plans obtained from IG-RRT* algorithms with $500$ nodes with $W=0.2\times 10^{-3} I_2$ m$^2$ in a $1\text{m}\times 1\text{m}$ obstacle-filled environment. The blue ellipses show $90\%$  prior confidence ellipses while the black ellipses show $90\%$  posterior confidence ellipses.}%  }
    \label{fig:initial}
\end{figure}
\begin{figure}[ht!]%\label{fig:exp_snap}
    \vspace{-0.5cm}
    \centering
    \subfloat[$\alpha=0.1$.]
    {\includegraphics[trim = 2.1cm 0cm 0.0cm 0.4cm, clip=true, width=0.55\columnwidth]{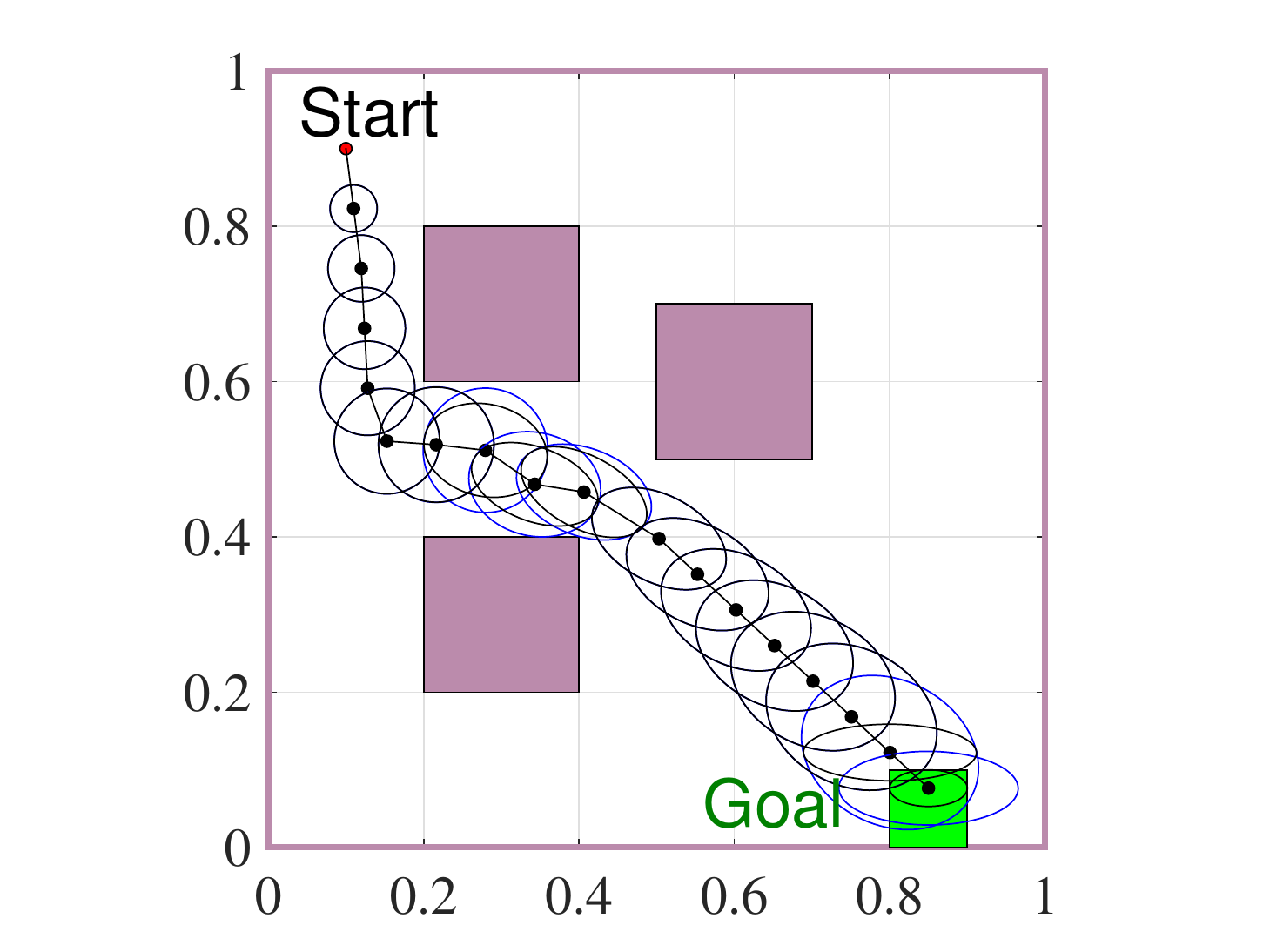}
    \label{fig:fin_01}} 
    \subfloat[$\alpha=1.0$.] 
    {\includegraphics[trim = 2.1cm 0cm 0.0cm 0.4cm, clip=true, width=0.55\columnwidth]{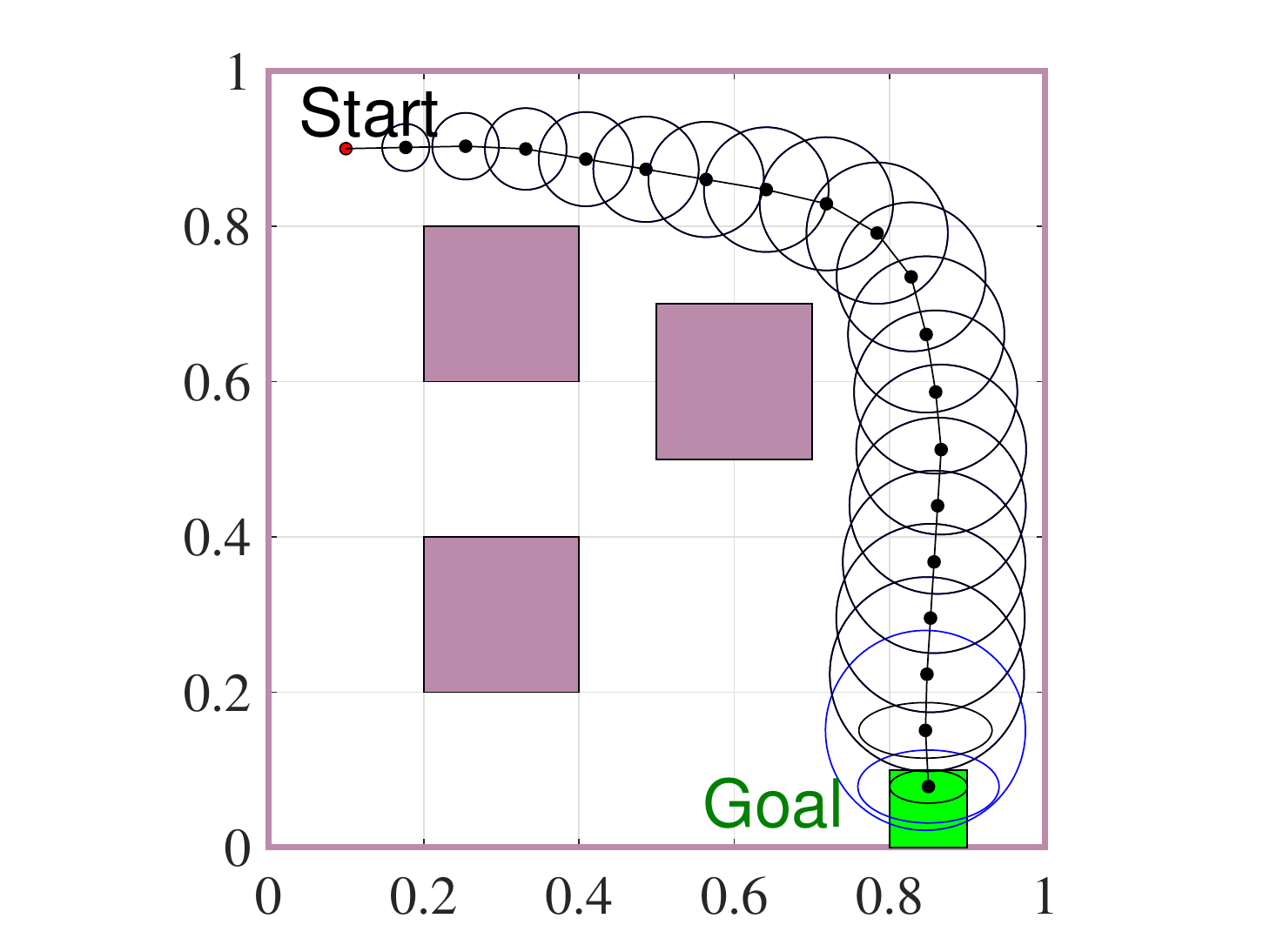}
    \label{fig:fin_1}}
    \caption{Path plans obtained after performing the proposed smoothing algorithm for the paths depicted in Fig.~\ref{fig:initial}. The blue ellipses show $90\%$  prior confidence ellipses while the black ellipses show posterior confidence ellipses.}%  }
    \label{fig:final}
\end{figure}
\begin{figure}[ht!]
\vspace{-0.7cm}
\centering
\includegraphics[trim = 0cm 0cm 0cm 0cm, clip=true, width = 0.75\columnwidth]{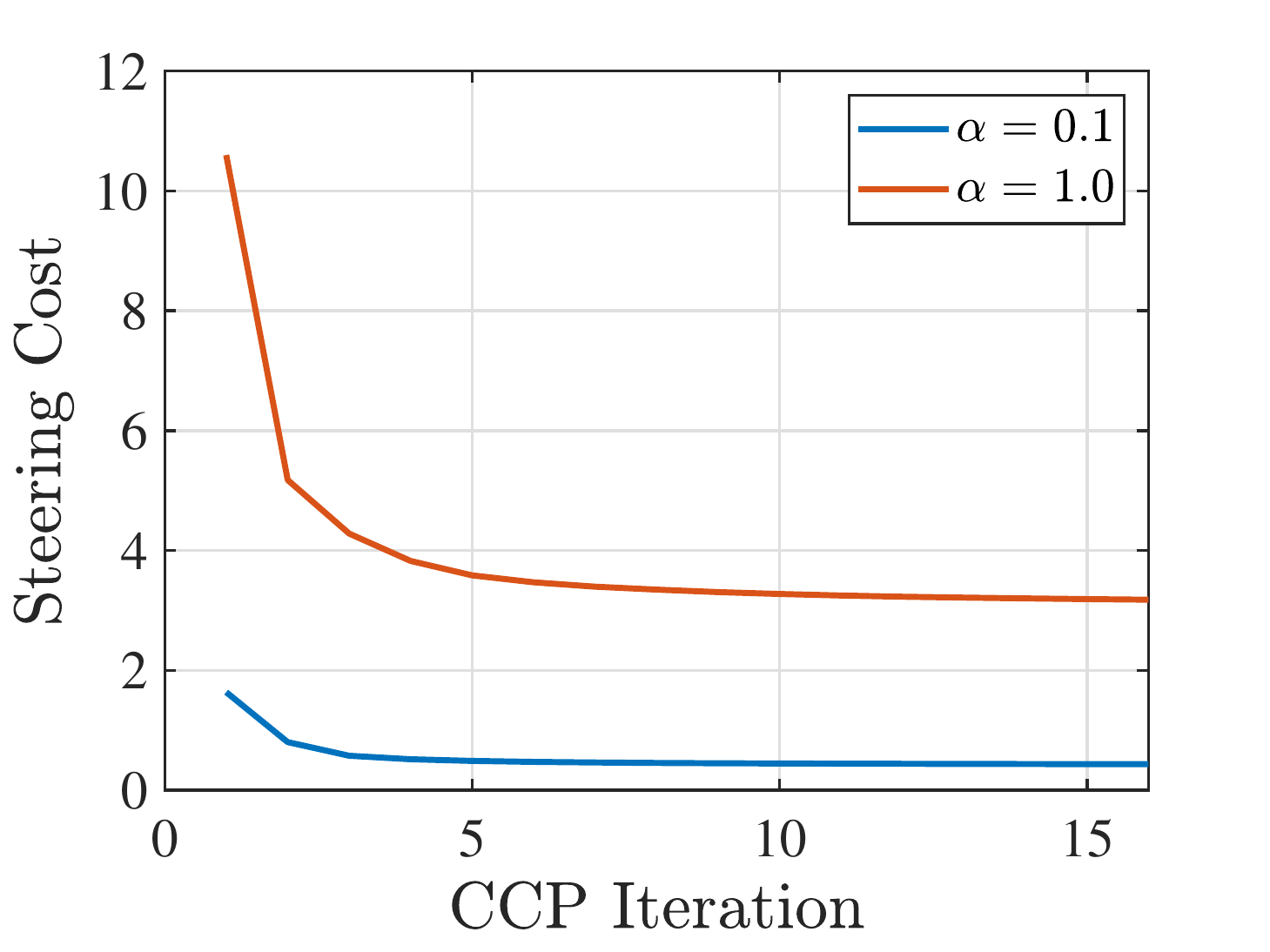}
\vspace{-0.2cm}
\caption{Steering cost versus CCP iteration.}
\label{fig:cost}
\end{figure}
%\vspace{-0.2cm}
\section{Conclusion and Future Work}
\label{sec:Conclusion}
In this work, we studied the smoothing of minimum sensing belief paths obtained by the IG-RRT* algorithm. We derived a novel safety constraint to bound the probability of collision with polyhedral obstacles in the transition between two Gaussian belief states. We deployed the presented safety constraint to formulate minimum sensing path planning as an optimization problem. We formulated this problem as a DOC program, for which the CCP algorithm can be utilized to find local optima. We proposed to use such a CCP algorithm as an efficient smoothing algorithm.
%to smooth the path plans obtained by IG-RRT* algorithm for minimum sensing path planning problem.
Numerical simulations demonstrated the utility of the proposed algorithm.

This paper assumes a fixed (common) confidence level for all
transitions on the path, and it is silent about the end-to-end
probability of collision. One future direction is to develop a
smoothing mechanism that allows the allocation of (potentially) different confidence
levels to each transition subject to a constraint on the end-to-end collision probability.
\begin{appendices}
%\iffalse
\vspace{-0.2cm}
\section{Proof of Theorem~\ref{theo:one} } \label{sec:AppenI}
We provide the proof for $d=2$; however, the proof can be generalized for arbitrary dimension. 
If we denote the elements of $x$ as $x= [q_1\; q_2]^\top$, feasibility problem \eqref{eq:feas_prob} becomes
\vspace{-0.2cm}
\begin{subequations}
\label{eq:col_new}
\begin{align}
\min_{s\geq 0,\; q_1, q_2}\quad  & 0\\
\text{s.t.} \quad & 
R_0 +R_1 q_1+R_2 q_2+ R_3s\succ 0,\\
& s \leq 1,  \quad A_1 q_1 + A_2 q_2 \leq b, 
\end{align}
\end{subequations}
where
\vspace{-0.2cm}
\begin{align*}
    &R_0=\begin{bmatrix}
    \chi^2 & x_{k-1}^\top\\
    x_{k-1} & P_{k-1}
    \end{bmatrix}, \quad 
    R_1=\begin{bmatrix}
    0 & -1& 0\\
    -1 & 0&0\\
    0 & 0 & 0
    \end{bmatrix},\\
    & R_2=\begin{bmatrix}
    0 & 0& -1\\
    0 & 0&0\\
    -1 & 0 & 0
    \end{bmatrix}, 
    \quad R_3=\begin{bmatrix}
    0 & \Delta x_k^\top\\
    \Delta x_k & W
    \end{bmatrix},
\end{align*}
and $A_1$ and $A_2$ are the first and the second columns of $A$, respectively. The
Lagrangian of problem \eqref{eq:col_new}, can be written as $ \mathcal{L}(q_1,q_2,s, M, \gamma, \lambda) = \textup{Tr}\left(-(R_0 +R_1 q_1+R_2 q_2+ R_3 s) M\right)+ \gamma (s-1) +\lambda^\top(A_1 q_1 + A_2 q_2- b)$, where $M\succeq0$, $\gamma\geq0$, and $\lambda \geq 0$ are dual variables. Dual function for \eqref{eq:col_new} can be written as
$g(M, \gamma, \lambda) = -\textup{Tr}(R_0M)-\lambda^\top b-\gamma+ \inf_{q_1} (-\textup{Tr}(R_1M)+\lambda^\top A_1)   q_1 + \inf_{q_2} (-\textup{Tr}(R_2M)+\lambda^\top A_2)  q_2+ \inf_{s\geq0}(-\textup{Tr}(R_3M)+\gamma) s$. 
% \begin{align}
% \nonumber
%   g(M, \gamma, \lambda) =\; &-\textup{Tr}(R_0M)-\lambda^\top b-\gamma\\
%   \nonumber
%   + & \inf_{q_1}\;  (-\textup{Tr}(R_1M)+\lambda^\top A_1) \;  q_1\\
%   \nonumber
%   + & \inf_{q_2}\;  (-\textup{Tr}(R_2M)+\lambda^\top A_2) \;  q_2\\
%   + & \inf_{s\geq0}\; (-\textup{Tr}(R_3M)+\gamma) \; s
% \end{align}
Hence, the dual problem of \eqref{eq:col_new} can be written as
\vspace{-0.2cm}
\begin{subequations}
\label{eq:dual}
\begin{align}\label{eq:dual_a}
    \max \quad & -\textup{Tr}(R_0M)-\lambda^\top b-\gamma\\ \label{eq:dual_b}
    \text{s.t.} \quad &  -\textup{Tr}(R_1M)+\lambda^\top A_1=0,\\\label{eq:dual_c}
    &  -\textup{Tr}(R_2M)+\lambda^\top A_2=0,\\ \label{eq:dual_d}
    &-\textup{Tr}(R_3M)+\gamma\geq 0.
\end{align}
\end{subequations}
 where variables are $M\succeq 0$, $\gamma\geq 0$, and $\lambda \geq 0$. If we denote the element at the $i$th row and the $j$th column of $M$ by $m_{i,j}$, 
%  If we expand $M$ as
% \[M=\begin{bmatrix}
% m_{1,1}& m_{1,2}& m_{1,3}\\
% m_{1,2}& m_{2,2}& m_{2,3}\\
% m_{1,3}& m_{2,3}& m_{3,3}
% \end{bmatrix},\]
constraints \eqref{eq:dual_b} and \eqref{eq:dual_c} yield $[m_{1,2}\; m_{1,3}] = -\frac{1}{2} \lambda^\top A$. Hence, $M$ has the form  
\vspace{-0.2cm}
\[M=\frac{1}{2}\begin{bmatrix}
M_1& - \lambda^\top A\\
-A^\top \lambda & M_2
\end{bmatrix} \succeq 0.\] 
Using the structure of $M$, Problem \eqref{eq:dual} turns to 
% \eqref{eq:dual_d} becomes
% \begin{align}
% \frac{1}{2}\textup{Tr}(WM_2)- \lambda^\top A \Delta x_k  \leq \gamma,
% \end{align}
% and the objective function \eqref{eq:dual_a} can be expanded as
% \begin{align}
%     -\frac{1}{2} M_1 \chi^2+\lambda^\top(A x_{k-1}-b) -\frac{1}{2}\textup{Tr}(P_{k-1} M_2)-\gamma.
% \end{align}
% Consequently, Problem \eqref{eq:dual} can be cast as
%\begin{subequations}
\vspace{-0.2cm}
\begin{align} \nonumber
    \max \quad  &-\frac{1}{2}M_1\chi^2+\!\lambda^\top(A x_{k-1}-b)\! -\frac{1}{2}\textup{Tr}(P_{k-1} M_2)\!-\!\gamma\\ \nonumber
    \text{s.t.} \quad & \begin{bmatrix}
M_1 & \lambda^\top A\\ \nonumber
A^\top \lambda & M_2
\end{bmatrix} \succeq 0, %\\ \label{eq:dual_2}
\frac{1}{2}\textup{Tr}(WM_2)- \lambda^\top A \Delta x_k  \leq \gamma,
\end{align}
where the variables are $M_1>0, M_2\succeq 0, \lambda\geq 0,$ and $\gamma \geq 0$.
%\end{subequations}
%where w.l.o.g we assumed $M_1=1$. 
After performing the maximization w.r.t $\gamma$, %Problem~\eqref{eq:dual_2}
this optimization simplifies to 
%\vspace{-0.2cm}
\begin{subequations}
\label{eq:dual_3}
\begin{align}
\nonumber
    \max_{M_1\geq 0, M_2\succeq 0, \lambda\geq 0}\!\! &-\!\frac{1}{2}M_1\chi^2\!+\!\lambda^\top\!(A x_{k-1}\!-b)\!-\!\frac{1}{2}\textup{Tr}(P_{k-1} M_2\!)\\
    &+\!\min\{0,-\frac{1}{2}\textup{Tr}(WM_2)\!+\!\lambda^\top\!A \Delta x_k\}\\ \label{eq:dual_3_b}
    \text{s.t.} \quad \quad & \begin{bmatrix}
M_1 & \lambda^\top A\\
A^\top \lambda & M_2
\end{bmatrix} \succeq 0.
\end{align}
\end{subequations}
The objective function and the constraints in \eqref{eq:dual_3} are affine in dual variables $M_1$, $M_2$, and $\lambda$. Thus, \eqref{eq:dual_3} is unbounded iff it admits a feasible solution $(M_1>0,  M_2\succeq 0, \lambda \geq0,)$ that yields a positive value of objective function. From the theorem of alternatives, we know unboundedness of dual problem implies the infeasibility of primal problem and vice versa. Hence, \eqref{eq:col_new} is infeasible iff $\exists \lambda \geq0, M_2\succeq 0$ such that
\vspace{-0.6cm}
\begin{subequations}
\label{eq:cond}
\begin{align}
\label{eq:cond_a}
  &2\lambda^\top(A x_{k-1}-b) - \textup{Tr}(P_{k-1} M_2) \geq \chi^2,\\ \label{eq:cond_b}
   &2\lambda^\top(A x_{k}-b) - \textup{Tr}((P_{k-1}+W) M_2) \geq \chi^2,\\ \label{eq:cond_c}
  & M_2\succeq \lambda^\top A A^\top \lambda, 
%   \begin{bmatrix}
% 1 & \lambda^\top A\\
% A^\top \lambda & M_2
% \end{bmatrix} \succeq 0,
\end{align}
\end{subequations}
where w.l.o.g we assumed $M_1=1$ and \eqref{eq:cond_c} is obtained by applying Schur complement lemma to \eqref{eq:dual_3_b}.
% \begin{lemma}
% \label{lemma:handy}
% The minimizer of 
% \begin{align*}
%     \min_{M\succeq 0} \quad& \textup{Tr}(PM)\\
%     \text{s.t.}\quad & \begin{bmatrix}
% 1 & a^\top\\
% a& M
% \end{bmatrix} \succeq 0,
% \end{align*}
% is $M^*=  a^\top a$ regardless of $P\succeq0$, and the optimal value is $J^*= a^\top P a$.
% \end{lemma}
% \begin{proof}
% The proof is left to the reader for brevity. 
% \end{proof}
%Lemma~\ref{lemma:handy} implies that 
It is easy to verify that the maximum values of the LHS of both \eqref{eq:cond_a} and \eqref{eq:cond_b} subject to \eqref{eq:cond_c} are obtained at $M_2=\lambda^\top A A^\top \lambda$. Hence, condition \eqref{eq:cond} can be equivalently written as $\exists \lambda\geq0$ such that \eqref{eq:tran_col} holds, which completes the proof.

\end{appendices}

%\vspace{-0.3cm}
%%%%%%%%%%%%%%%%%%%%%%%%%%%%%%%%%%%%%%%%%%%%%%%%%%%%%%%%%%%%%%%%%%%%%%%%%%%%%%%%
\addtolength{\textheight}{-0cm}   
% This command serves to balance the column length
% on the last page of the document manually. It shortens
% the textheight of the last page by a suitable amount.
% This command does not take effect until the next page
% so it should come on the page before the last. Make
% sure that you do not shorten the textheight too much.

%%%%%%%%%%%%%%%%%%%%%%%%%%%%%%%%%%%%%%%%%%%%%%%%%%%%%%%%%%%%%%%%%%%%%%%%%%%%%%%%
\bibliographystyle{IEEEtran}
\bibliography{ref_shorten.bib}

\end{document}